\def\eqref#1{equation~\ref{#1}}
\def\1{\bm{1}}
\DeclareMathAlphabet{\mathsfit}{\encodingdefault}{\sfdefault}{m}{sl}
\SetMathAlphabet{\mathsfit}{bold}{\encodingdefault}{\sfdefault}{bx}{n}
\def\gX{{\mathcal{X}}}
\def\gY{{\mathcal{Y}}}
\def\gZ{{\mathcal{Z}}}
\newcommand{\E}{\mathbb{E}}
\newcommand{\R}{\mathbb{R}}
\DeclareMathOperator*{\argmax}{arg\,max}
\DeclareMathOperator*{\argmin}{arg\,min}
\DeclareMathOperator{\Tr}{Tr}
\newtheorem{thm}{Theorem}[section]
\newtheorem{lemma}[thm]{Lemma}
   \newcommand{\DS}{\mathrm{S}}
\title{Sinkhorn AutoEncoders}
\author{
{\bf Giorgio Patrini}\thanks{{ }Equal contributions. $^\bullet$Now at Deeptrace. $\dagger$Now at Google Brain. $\ddagger$Now at DeepMind.} $^\bullet$\\
UvA Bosch Delta Lab\\[0.5cm]
{\bf Samarth Bhargav}\\
University of Amsterdam\\
\And
{\bf Rianne van den Berg}$^* \dagger$\\
University of Amsterdam\\[0.5cm]
{\bf Max Welling}\\
University of Amsterdam\\
CIFAR\\
\And
{\bf Patrick Forr\'{e}}\\
University of Amsterdam\\[0.5cm]
{\bf Tim Genewein}$\ddagger$\\
Bosch Center for\\ Artificial Intelligence\\
\And
{\bf Marcello Carioni}\\
KFU Graz\\[0.5cm]
{\bf Frank Nielsen}\\
\'Ecole Polytechnique\\ Sony CSL
}
\begin{document}


\maketitle


\begin{abstract}
Optimal transport offers an alternative to maximum likelihood for learning generative autoencoding models. We show that minimizing the $p$-Wasserstein distance between the generator and the true data distribution is equivalent to the unconstrained min-min optimization of the 
$p$-Wasserstein distance between the encoder aggregated posterior and the prior in latent space, plus a reconstruction error. We also identify the role of its trade-off hyperparameter as the capacity of the generator: its Lipschitz constant. Moreover, we prove that optimizing the encoder over any class of universal approximators, such as deterministic neural networks, is enough to come arbitrarily close to the optimum. We therefore advertise this framework, which holds for any metric space and prior, as a sweet-spot of current generative autoencoding objectives.\\
We then introduce the Sinkhorn auto-encoder (SAE), which approximates and minimizes the $p$-Wasserstein distance in latent space via backprogation through the Sinkhorn algorithm. SAE directly works on samples, i.e. it models the aggregated posterior as an implicit distribution, with no need for a reparameterization trick for gradients estimations. SAE is thus able to work with different metric spaces and priors with minimal adaptations.\\
We demonstrate the flexibility of SAE on latent spaces with different geometries and priors and compare with other methods on benchmark data sets.
\end{abstract}

\section{INTRODUCTION}

Unsupervised learning aims at finding the underlying rules that govern a given data distribution $P_X$. It can be approached by learning to mimic the data generation process, 
or by finding an adequate representation of the data. 
Generative Adversarial Networks (GAN) \citep{Goodfellow2014gans} belong to the former class, by learning to transform noise into an implicit distribution that matches the given one. 
AutoEncoders (AE) \citep{hinton2006reducing} are of the latter type, by learning a representation that maximizes the mutual information between the data and its reconstruction, subject to an information bottleneck.
Variational AutoEncoders (VAE) \citep{kingma2013auto, rezende2014stochastic}, provide both a generative model --- i.e. a \emph{prior} distribution $P_Z$ on the latent space with a decoder $G(X|Z)$ that models the conditional likelihood --- and an encoder $Q(Z|X)$ --- approximating the \emph{posterior} distribution of the generative model. 
Optimizing the exact marginal likelihood is intractable in latent variable models such as VAE's.
Instead one maximizes the Evidence Lower BOund (ELBO) as a surrogate. This objective trades off a reconstruction error of the input distribution $P_X$ and a regularization term that aims at minimizing the Kullback-Leibler (KL) divergence from the approximate posterior $Q(Z|X)$ to the prior $P_Z$.

An alternative principle for learning generative autoencoders comes from the theory of Optimal Transport (OT) \citep{villani2008optimal}, where the usual KL-divergence KL$(P_X,P_G)$ is replaced by OT-cost divergences $W_c(P_X,P_G)$, among which the $p$-Wasserstein distances $W_p$ are proper metrics. In the papers \cite{tolstikhin2018wasserstein,bousquet2017vegan} it was shown that the objective $W_c(P_X,P_G)$ can be re-written as the minimization of the reconstruction error of the input $P_X$ over all \emph{probabilistic} encoders $Q(Z|X)$ constrained to the condition of matching the \emph{aggregated posterior} $Q_Z$ --- the average (approximate) posterior $\E_{P_X}[Q(Z|X)]$ --- to the prior $P_Z$ in the latent space. 
In Wasserstein AutoEncoders (WAE) \citep{tolstikhin2018wasserstein}, it was suggested, following the standard optimization principles, to softly enforce that constraint via a penalization term depending on a choice of a divergence $D(Q_Z,P_Z)$ in latent space. For any such choice of divergence this leads to the minimization of a lower bound of the original objective, leaving the question about the status of the original objective open.
Nonetheless, WAE empirically improves upon VAE for the two choices made there, namely either a Maximum Mean Discrepancy (MMD)  \citep{gretton2007kernel,sriperumbudur2010hilbert,sriperumbudur2011universality}, or an adversarial loss (GAN), again both in latent space. 

We contribute to the formal analysis of autoencoders with OT. 
First, using the Monge-Kantorovich equivalence \citep{villani2008optimal}, we show that (in non-degenerate cases) the objective $W_c(P_X,P_G)$ can be reduced to the minimization of the reconstruction error of $P_X$ over any class containing the class of all \emph{deterministic} encoders $Q(Z|X)$, again constrained to $Q_Z=P_Z$.\\
Second, when restricted to the $p$-Wasserstein distance $W_p(P_X,P_G)$, and by using a combination of triangle inequality and a form of data processing inequality for the generator $G$,  we show that the soft and \emph{unconstrained} minimization of the reconstruction error of $P_X$ together with the penalization term $\gamma \cdot W_p(Q_Z,P_Z)$ is actually an \emph{upper} bound to the original objective $W_p(P_X,P_G)$, where the regularization/trade-off hyperparameter $\gamma$ needs to match at least the \emph{capacity} of the generator $G$, i.e.\ its Lipschitz constant.
This suggests that using a $p$-Wasserstein metric $W_p(Q_Z,P_Z)$ in latent space in the WAE setting \citep{tolstikhin2018wasserstein} is a preferable choice.

Third, we show that the minimum of that objective can be approximated from above by any class of universal approximators for $Q(Z|X)$ to arbitrarily small error. In case we choose the L$_p$-norms $\|\cdot\|_p$ and corresponding $p$-Wasserstein distances $W_p$ one can use the results of \citep{hornik1991approximation} to show that any class of probabilistic encoders $Q(Z|X)$ that contains the class of \emph{deterministic} neural networks has all those desired properties. This justifies the use of such classes in practice. Note that analogous results for the latter for other divergences and function classes are unknown. 

Fourth, as a corollary we get the folklore claim that matching the aggregated posterior $Q_Z$ and prior $P_Z$ is a \emph{necessary} condition for learning the true data distribution $P_X$ in rigorous mathematical terms. Any deviation will thus be punished with a poorer performance.
Altogether, we have addressed and answered the open questions in \citep{tolstikhin2018wasserstein,bousquet2017vegan} in detail and highlighted the sweet-spot framework for generative autoencoder models based on Optimal Transport (OT) for \emph{any} metric space and \emph{any} prior distribution $P_Z$, and with special emphasis on Euclidean spaces, L$_p$-norms and neural networks.


The theory supports practical innovations. We are now in a position to learn deterministic autoencoders, $Q(Z|X)$, $G(X|Z)$, by minimizing a reconstruction error for $P_X$ and the $p$-Wasserstein distance on the latent space between samples of the aggregated posterior and the prior $W_p(\hat Q_Z,\hat P_Z)$. The computation of the latter is known to be difficult and costly (cp. Hungarian algorithm \citep{kuhn1955hungarian}). A fast approximate solution is provided by the \emph{Sinkhorn algorithm} \citep{cuturi}, which uses an entropic relaxation.
We follow \citep{frogner} and \citep{geneway2017learning}, by exploiting the differentiability of the Sinkhorn iterations, and unroll it for backpropagation. In addition we correct for the entropic bias of the Sinkhorn algorithm \citep{geneway2017learning, feydy2018otmmd}. Altogether, we call our method the \emph{Sinkhorn AutoEncoder (SAE)}. 


The Sinkhorn AutoEncoder is agnostic to the analytical form of the prior, as it optimizes a sample-based cost function which is aware of the geometry of the latent space. Furthermore, as a byproduct of using deterministic networks, it models the aggregated posterior as an implicit distribution \citep{mohamed2016learning} with no need of the reparametrization trick for learning the encoder \citep{kingma2013auto}.
Therefore, with essentially no change in the algorithm, we can learn models with normally distributed priors and aggregated posteriors, as well as distributions living on manifolds such as hyperspheres \citep{davidson2018hyperspherical} and probability simplices.

In our experiments we explore how well the Sinkhorn AutoEncoder performs on the benchmark datasets MNIST and CelebA with different prior distributions $P_Z$ and geometries in latent space, e.g.\ the Gaussian in Euclidean space or the uniform distribution on a hypersphere.
Furthermore, we compare the SAE to the VAE \citep{kingma2013auto}, to the WAE-MMD \citep{tolstikhin2018wasserstein} and other methods of approximating the $p$-Wasserstein distance in latent space like the Hungarian algorithm \citep{kuhn1955hungarian} and the Sliced Wasserstein AutoEncoder \citep{kolouri2018sliced}.
We also explore the idea of matching the aggregated posterior $Q_Z$ to a standard Gaussian prior $P_Z$ via the fact that the $2$-Wasserstein distance has a closed form for Gaussian distributions: we estimate the mean and covariance of $Q_Z$ on minibatches and use the loss $W_2(\hat Q_Z,P_Z)$ for backpropagation. Finally, we train SAE on MNIST with a probability simplex as a latent space and visualize the matching of the aggregate posterior and the prior.

\section{PRINCIPLES OF WASSERSTEIN AUTOENCODING}\label{principles}


\subsection{OPTIMAL TRANSPORT}

We follow \cite{tolstikhin2018wasserstein} and denote with $\gX, \gY, \gZ$ the sample spaces and with $X,Y,Z$ and $P_X, P_Y, P_Z$ the corresponding random variables and distributions. Given a map $F:\mathcal{X} \rightarrow \mathcal{Y}$ we denote by $F_\#$ the push-forward map acting on a distribution $P$ as $P \circ F^{-1}$. If $F(Y|X)$ is non-deterministic we define the push-forward $F(Y|X)_\# P_X$ of a distribution $P$ as the induced marginal of the joint distribution $F(Y|X)P_X$. 
For any measurable non-negative \emph{cost} $c : \gX \times \gY \to \R^{+} \cup \{ \infty\}$, one can define the following \emph{OT-cost} 
between distributions  $P_X$ and $P_Y$ via:
\begin{align}\label{kantorovich}
W_c(P_X,P_Y) &=  \inf_{ \Gamma \in \Pi(P_X,P_Y)} \E_{(X,Y) \sim \Gamma}[ c(X,Y) ],  
\end{align}
where $\Pi(P_X,P_Y)$ is the set of all joint distributions that have $P_X$ and $P_Y$ as the marginals.
The elements from $\Pi(P_X,P_Y)$ are called \emph{couplings} from $P_X$ to $P_Y$.
 If $c(x,y) = d(x,y)^p$ for a metric $d$ and $p \ge 1$ then $W_p:=\sqrt[p]{W_c}$ is called the \emph{$p$-Wasserstein distance}.

Let $P_X$ denote the true data distribution on $\gX$. We define a \emph{latent variable model} as follows: we fix a latent space $\gZ$ and a prior distribution $P_Z$ on $\gZ$ and consider the conditional distribution $G(X|Z)$ (the decoder) parameterized by a neural network $G$. Together they specify a generative model as $G(X|Z) P_Z$. The induced marginal will be denoted by $P_G$.
Learning $P_G$ such that it approximates the true $P_X$ is then defined as:
\begin{align}
 \min_G W_c(P_X,P_G).
\end{align}
Because of the infimum over $\Pi(P_X,P_G)$ inside $W_c$, this is intractable.
To rewrite this objective we consider the posterior distribution $Q(Z|X)$ (the encoder) and its \emph{aggregated posterior} $Q_Z$:
\begin{align}\label{aggregate}
Q_Z =  Q(Z|X)_\# P_X =  \E_{X \sim P_X} Q(Z|X),
\end{align}
the induced marginal of the joint $Q(Z|X)P_X$. 

\subsection{THE WASSERSTEIN AUTOENCODER (WAE)}

\cite{tolstikhin2018wasserstein} show that
if the decoder $G(X|Z)$ is deterministic, i.e. $P_G=G_\#P_Z$, or in other words, if all stochasticity of the generative model is captured by $P_Z$, then:
\begin{equation}\label{main}
W_c(P_X,P_G) = \inf_{\substack{Q(Z|X):\\{Q_Z = P_Z}}} \E_{X \sim P_X}\E_{Z \sim Q(Z|X)}[c(X,G(Z))]. 
\end{equation} 
Learning the generative model $G$ with the WAE amounts to the objective:
\begin{align}\label{objective}
 \min_G \min_{Q(Z|X)}  & \E_{X \sim P_X} \E_{Z \sim Q(Z|X)}[c(X,G(Z))] \nonumber \\
 + & \beta \cdot D(Q_Z,P_Z) ,
\end{align}
where $\beta >0$ is a Lagrange multiplier and $D$ is any divergence measure on probability distributions on $\gZ$. 
The specific choice for $D$  is left open. WAE uses either MMD \citep{gretton2012kernel} or a discriminator trained adversarially for $D$. As discussed in \cite{bousquet2017vegan}, Eq. \ref{objective} is a lower bound of Eq. \ref{main} for any choice of $D$ and any value of $\beta > 0$.
Minimizing this lower bound does not ensure a minimization of the original objective of Eq. \ref{main}.

\subsection{THEORETICAL CONTRIBUTIONS}


We improve upon the analysis of \cite{tolstikhin2018wasserstein} of generative autoencoders in the framework of Optimal Transport in several ways.
Our contributions can be summarized by the following theorem, upon which we will comment directly after.

\begin{thm}
\label{holy-grail-app}
Let $\gX$, $\gZ$ be endowed with any metrics and $p \ge 1$.
Let $P_X$ be a non-atomic distribution\footnote{A probability measure is non-atomic if every point in its support has zero measure. It is important to distinguish between the \emph{empirical} data distribution $\hat P_X$, which is always atomic, and the underlying \emph{true} distribution $P_X$, only which we need to assume to be non-atomic.} and $G(X|Z)$ be a deterministic generator/decoder that is $\gamma$-Lipschitz. Then we have the equality: 
\begin{align}
W_p(P_X,P_G) &= \inf_{Q \in \mathcal{F}} \sqrt[p]{\E_{X \sim P_X}\E_{Z \sim Q(Z|X)}\left[d(X,G(Z))^p\right]} \nonumber \\
& \quad + \gamma \cdot W_p(Q_Z,P_Z)\label{rhs-holy}\, , 
\end{align}
where $\mathcal{F}$ is any class of probabilistic encoders that at least contains a class of universal approximators.
If $\gX$, $\gZ$ are Euclidean spaces endowed with the L$_p$-norms $\|\cdot\|_p$ then a valid minimal choice for $\mathcal{F}$ is the class of all deterministic neural network encoders $Q$ (here written as a function), for which the objective reduces to:
\begin{align*}
W_p(P_X,P_G) =  &\inf_{Q \in \mathcal{F}} \sqrt[p]{\E_{X \sim P_X}\left[\|X - G(Q(X))\|_p^p\right]} \\ 
& \quad + \gamma \cdot W_p(Q_Z,P_Z).
\end{align*}
\end{thm}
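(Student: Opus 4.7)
The plan is to prove the equality by establishing the two inequalities separately. The bound $W_p(P_X,P_G) \leq \text{RHS}$ holds for every encoder $Q \in \mathcal{F}$ and follows from a short metric argument; the reverse inequality leverages Eq.~\ref{main} and requires the non-atomicity of $P_X$ in order to exhibit a near-optimal encoder that either lies in, or is approximable by, $\mathcal{F}$.

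For the upper bound, I would fix an arbitrary $Q(Z|X) \in \mathcal{F}$ and set $\nu := G_{\#} Q_Z$. Since $W_p$ is a genuine metric,
\begin{equation*}
W_p(P_X,P_G) \leq W_p(P_X,\nu) + W_p(\nu,P_G).
\end{equation*}
The joint law of $(X,G(Z))$ with $X \sim P_X$ and $Z \sim Q(Z|X)$ is a coupling of $(P_X,\nu)$, giving $W_p(P_X,\nu)^p \leq \E[d(X,G(Z))^p]$. Because $P_G=G_{\#}P_Z$ and $G$ is $\gamma$-Lipschitz, pushing the optimal coupling of $(Q_Z,P_Z)$ through $G \times G$ produces a coupling of $(\nu,P_G)$ with $p$-cost at most $\gamma^p W_p(Q_Z,P_Z)^p$, so $W_p(\nu,P_G) \leq \gamma\, W_p(Q_Z,P_Z)$. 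Adding these estimates and taking the infimum over $Q \in \mathcal{F}$ closes this direction.

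For the lower bound, Eq.~\ref{main} applied to $c=d^p$ yields $W_p(P_X,P_G)^p = \inf_{Q:\,Q_Z=P_Z} \E[d(X,G(Z))^p]$. Fix $\varepsilon>0$ and let $Q^{\star}$ be a feasible, $\varepsilon$-optimal encoder. The non-atomicity of $P_X$, combined with the Monge-Kantorovich equivalence, allows us to take $Q^{\star}$ to be a deterministic measurable map $\gX \to \gZ$, obtained by a measurable selection from $G^{-1}(T(\cdot))$ where $T$ is a Monge map transporting $P_X$ to $P_G$. For this $Q^{\star}$ the penalty term vanishes, so the RHS evaluated at $Q^{\star}$ is at most $\sqrt[p]{W_p(P_X,P_G)^p+\varepsilon}$, which approaches $W_p(P_X,P_G)$ as $\varepsilon \to 0$. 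If $Q^{\star}\notin \mathcal{F}$, I would invoke universality: pick $Q_n \in \mathcal{F}$ with $Q_n \to Q^{\star}$ in $L^p(P_X)$ and argue that both the reconstruction term and $W_p((Q_n)_{\#}P_X,P_Z)$ pass to the limit, the latter via the standard bound $W_p((Q_n)_{\#}P_X,Q^{\star}_{\#}P_X) \leq \|Q_n-Q^{\star}\|_{L^p(P_X)}$, the former via the $\gamma$-Lipschitzness of $G$ together with the triangle inequality in $L^p$.

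The hard part will be making the universal approximation step rigorous, because it requires a class $\mathcal{F}$ for which $L^p$-density implies the Wasserstein control of the aggregated posterior \emph{and} continuity of the reconstruction loss. This is precisely where the Euclidean/$L_p$ specialization enters: Hornik's theorem \citep{hornik1991approximation} supplies $L^p(P_X)$-density of deterministic feedforward networks with a non-polynomial activation, and a deterministic $Q$ collapses the reconstruction integrand to $\|X-G(Q(X))\|_p^p$. Together these yield the explicit second formula in the statement and close the proof.
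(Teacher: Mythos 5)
Your proposal is correct and follows the paper's proof essentially step for step: the triangle inequality through the reconstruction $G_\#Q_Z$, the Lipschitz contraction $W_p(G_\#Q_Z,G_\#P_Z)\le\gamma\,W_p(Q_Z,P_Z)$, the Monge--Kantorovich passage to deterministic encoders (a right inverse of $G$ composed with a Monge map, valid for non-atomic $P_X$), and an $L^p(P_X)$-controlled universal-approximation argument; the paper merely packages the deterministic-encoder reduction as a separate Theorem~\ref{mainth} before assembling the pieces. One small clarification on your final paragraph: the estimates showing that $L^p(P_X)$-density of $\mathcal{F}$ controls both the reconstruction term and $W_p(Q_Z,P_Z)$ work in arbitrary metric spaces (that is the general first half of the theorem); the Euclidean/$L_p$ hypothesis is invoked only so that Hornik's theorem identifies deterministic feedforward networks as a concrete realization of the universal class $\mathcal{F}$.
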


The proof  of Theorem  \ref{holy-grail-app} can be found in Appendix \ref{proof:mainth1}, \ref{proof:consequence} and \ref{proof:propsuff}. It uses the following three arguments:

i.) It is the Monge-Kantorovich equivalence \citep{villani2008optimal} for non-atomic $P_X$ that allows us to restrict to \emph{deterministic} encoders $Q(Z|X)$. This is a first theoretical improvement over the Eq. \ref{main} from \cite{tolstikhin2018wasserstein}. 

ii.) The \emph{upper} bound can be achieved by a simple \emph{triangle inequality}:
$$ W_p(P_X,P_G) \le W_p(P_X, \tilde P_X) + W_p( \tilde P_X ,P_G),$$
where $\tilde P_X := G_\# Q(Z|X)_\# P_X = G_\# Q_Z$ is the reconstruction of $P_X$. Note that the triangle inequality is not available for other general cost functions or divergences. This might be a reason for the difficulty of getting upper bounds in such settings. 
On the other hand, if a divergence satisfies the triangle inequality 
then one can use the same argument to arrive at new variational optimization objectives and principles.

iii.) We then prove the \emph{data processing inequality} for the $W_p$-distance:
$$ W_p( G_\#Q_Z , G_\# P_Z) \le \gamma \cdot W_p(Q_Z,P_Z), $$
with any $\gamma \ge \|G\|_\text{Lip}$, the Lipschitz constant of $G$. 
Such an inequality is available and known for several other divergences usually with $\gamma=1$.

Putting all three pieces together we immediately arrive at the \emph{equality} (upper and lower bound) of the first part of Theorem \ref{holy-grail-app}. This insight directly suggests that using the divergence $W_p(Q_Z,P_Z)$ in latent space with a hyperparameter $\gamma \ge \|G\|_\text{Lip}$ in the WAE setting is a preferable choice. These are two further improvements over \cite{tolstikhin2018wasserstein}.
Note that if $G$ is a neural network with activation function $g$ with $\|g'\|_\infty \le 1$  (e.g.\ ReLU, sigmoid, $\tanh$, etc.)
and weight matrices $(B_\ell)_{ \ell =1, \dots, L}$, then $G$ is $\gamma$-Lipschitz for any $\gamma \ge \|B_1\|_p\cdots \|B_L\|_p$, where the latter is the product of the L$_p$-matrix norms (cp.  \cite{balan2017lipschitz}).

iv.) For the second part of Theorem \ref{holy-grail-app} we use the universal approximator property of neural networks \citep{hornik1991approximation} and the compatibility of the L$_p$-norm $\|\cdot \|_p$-norm with the $p$-Wasserstein distance $W_p$. Proving such statements for other divergences seems to require much more effort (if possible at all).

When the encoders are restricted to be neural networks of limited capacity, e.g.\ if their architecture is fixed,  then enforcing $Q_Z \approx P_Z$ might not be feasible in the general case of dimensionality mismatch  between $\gX$ and $\gZ$ \citep{rubenstein2018wasserstein}.
In fact, since the class of deterministic neural networks  (of limited capacity)
is much smaller than the class of deterministic measurable maps, one might consider adding noise to the output, i.e.\ use stochastic networks instead.
Nonetheless, neural networks can approximate any measurable map up to arbitrarily small error  \citep{hornik1991approximation}. 
Furthermore, in practice the encoder $Q(Z|X)$ maps from the high dimensional data space $\gX$ to the much lower dimensional latent space $\gZ$, suggesting that the task of matching distributions in the lower dimensional latent space $\gZ$ should be feasible.
Also, in view of Theorem \ref{holy-grail-app} it follows that learning deterministic autoencoders is sufficient to approach the theoretical bound and thus will be our empirical choice.

Theorem \ref{holy-grail-app} certifies that, \emph{failing} to match aggregated posterior and prior makes learning the data distribution impossible. Matching in latent space should be seen as fundamental as minimizing the reconstruction error, a fact known about the performance of VAE \citep{hoffman2016elbo, higgins2016beta, alemi2018fixing, rosca2018distribution}. This necessary condition for learning the data distribution turns out to be also sufficient assuming that the set of encoders is expressive enough to nullify the reconstruction error.  

With the help of Theorem \ref{holy-grail-app} we arrive at the following \emph{unconstrained min-min-optimization} objective over deterministic decoder and encoder neural networks ($Q$ written as a function here): 
\begin{align*}
\min_G \min_{Q} &\sqrt[p]{\E_{X \sim P_X} \left[\|X-G(Q(X))\|_p^p\right]}\\
& + \gamma \cdot W_p(Q_Z,P_Z),
\end{align*}
with $\gamma \ge \|G\|_\text{Lip}$ for all occuring $G$.


\section{THE SINKHORN AUTOENCODER}
\subsection{ENTROPY REGULARIZED OPTIMAL TRANSPORT}

Even though the theory supports the use of the $p$-Wasserstein distance $W_p(Q_Z,P_Z)$ in latent space, it is notoriously hard to compute or estimate.
In practice, we will need to approximate $W_p(Q_Z,P_Z)$ via samples from $Q_Z$ (and $P_Z$).
The sample version $W_p(\hat Q_Z, \hat P_Z)$ with $\hat P_Z= \frac{1}{M} \sum_{m=1}^M \delta_{z_m}$ and $\hat Q_Z= \frac{1}{M} \sum_{m=1}^M \delta_{ \tilde z_m}$ has an exact solution, which can be computed using the Hungarian algorithm \citep{kuhn1955hungarian} in near $O(M^3)$ time (time complexity).
Furthermore, $W_p(\hat Q_Z, \hat P_Z)$ will differ from $W_p(Q_Z, P_Z)$ in size of about $O(M^{-\frac{1}{k}})$ (sample complexity), where $k$ is the dimension of $\gZ$ \citep{weed2017sharp}. Both complexity measures are unsatisfying in practice, but they can be improved via \emph{entropy regularization} \citep{cuturi}, which we will explain next. 

Following \cite{geneway2017learning, genevay2019sample, feydy2018otmmd} we define the \emph{entropy regularized OT cost} with $\varepsilon \ge 0$:
\begin{align}
\label{reg-ot-prob}
\tilde{S}_{c,\varepsilon}(P_X,P_Y) &:=  \inf_{ \Gamma \in \Pi(P_X,P_Y)} \E_{(X,Y) \sim \Gamma}[ c(X,Y) ]  \nonumber\\
& \qquad + \varepsilon \cdot \mathrm{KL}(\Gamma,P_X \otimes P_Y).  
\end{align}
This is in general not a divergence due to its entropic bias. When we remove this bias we arrive at the \emph{Sinkhorn divergence}:
\begin{align}
\label{reg-ot-prob-unbiased}
S_{c,\varepsilon}(P_X,&P_Y) := \tilde{S}_{c,\varepsilon}(P_X,P_Y) \nonumber\\
& - \frac{1}{2}\left( \tilde{S}_{c,\varepsilon}(P_X,P_X) + \tilde{S}_{c,\varepsilon}(P_Y,P_Y) \right).
\end{align}
The Sinkhorn divergence 
has the following limiting behaviour:
\[\begin{array}{rclcl}
S_{c,\varepsilon}(P_X,P_Y) & \stackrel{\varepsilon \to 0}{\longrightarrow} & W_{c}(P_X,P_Y), \\
S_{c,\varepsilon}(P_X,P_Y) & \stackrel{\varepsilon \to \infty}{\longrightarrow} & \mathrm{MMD}_{-c}(P_X,P_Y). 
\end{array}\]
This means that the Sinkhorn divergence $S_{c,\varepsilon}$ interpolates between OT-divergences and MMDs \citep{gretton2012kernel}. 
On the one hand, for small $\varepsilon$ it is known that $S_{c,\varepsilon}$ deviates from the initial objective $W_c$ by about $O(\varepsilon \log(1/\varepsilon))$ \citep{genevay2019sample}. 
On the other hand, if $\varepsilon$ is big enough then $S_{c,\varepsilon}$ will have the  more favourable sample complexity of $O(M^{-\frac{1}{2}})$ of MMDs, which is independent of the dimension, and was proven in \cite{genevay2019sample}.
Furthermore, the \emph{Sinkhorn algorithm} \citep{cuturi}, which will be explained in the section \ref{sinkhorn-sinkhorn}, allows for faster computation of the Sinkhorn divergence $S_{c,\varepsilon}$ with time complexity close to $O(M^2)$ \citep{altschuler2017near}.
Therefore, if we balance $\varepsilon$ well, we are close to our original objective and at the same time have favourable computational and statistical properties.

\subsection{THE SINKHORN AUTOENCODER OBJECTIVE}

Guided by the theoretical insights, we can restrict the WAE framework \citep{tolstikhin2018wasserstein} to Sinkhorn divergences with cost $\tilde c$ in latent space and $c$ in data space to arrive at the  objective:
\begin{align}\label{sae-objective}
 \min_G \min_{Q(Z|X)}  & \E_{X \sim P_X} \E_{Z \sim Q(Z|X)}[c(X,G(Z))] \nonumber \\
  &\qquad + \beta \cdot S_{\tilde c,\varepsilon}(Q_Z,P_Z) ,
\end{align}
with hyperparameters $\beta \ge 0$ and $\varepsilon \ge 0$.\\
Restricting further to $p$-Wasserstein distances, corresponding Sinkhorn divergences
and deterministic en-/decoder neural networks, we arrive at the \emph{Sinkhorn AutoEncoder (SAE)} objective:
 \begin{align}\label{sae-objective-strict}
 \min_G \min_{Q}  & \sqrt[p]{\E_{X \sim P_X}\left[\|X-G(Q(X))\|_p^p\right]} \nonumber\\
  & \qquad +\gamma \cdot S_{\|\cdot\|_p^p,\varepsilon}(Q_Z,P_Z)^{\frac{1}{p}},
\end{align}
which is then up to the $\varepsilon$-terms close to the original objective.
Note that for computational reasons 
it is sometimes convenient to remove the $p$-th roots again. 
The inequality $ \sqrt[p]{a} + \sqrt[p]{b} \le 2 \sqrt[p]{a +b}$ shows that the additional loss is small, while still minimizing an upper bound 
(using $\beta:=\gamma^p$). 


\subsection{THE SINKHORN ALGORITHM}
\label{sinkhorn-sinkhorn}
\label{sec:sinkhorn}

Now that we have the general Sinkhorn AutoEncoder optimization objective, we need to review how the Sinkhorn divergence 
$S_{\tilde c,\varepsilon}(Q_Z,P_Z)$ can be estimated in practice by the \emph{Sinkhorn algorithm} \citep{cuturi} using samples.

If we take $M$ samples each from $Q_Z$ and $P_Z$, we get the corresponding empirical (discrete) distributions concentrated on $M$ points:
$\hat P_Z  = \frac{1}{M}\sum_{m=1}^M \delta_{z_m}$ and $\hat Q_Z = \frac{1}{M} \sum_{m=1}^M \delta_{\tilde z_m}$.
Then, the optimal coupling of the (empirical) entropy regularized OT-cost $\tilde S_{\tilde c,\varepsilon}(\hat Q_Z, \hat P_Z)$  with $\varepsilon \ge 0$ is given by the matrix:%
\begin{align}\label{eq:discrete-sink-argmin}
R^*   & := \argmin_{R \in \DS_M} \tfrac{1}{M} \langle R , \tilde C\rangle_F - \varepsilon \cdot H(R),
\end{align}
%
where $\tilde C_{ij} = \tilde c(\tilde z_i,z_j)$ is the matrix associated to the cost $\tilde c$, $R$ is a doubly stochastic matrix as defined in
$\DS_M = \{ R \in \R^{M \times M}_{\ge 0} \;|\;   R\textbf{1}= \textbf{1} ,\, R^T\textbf{1}= \textbf{1} \}, $
and $\langle \cdot, \cdot \rangle_F$ denotes the Frobenius inner product; $\textbf{1}$ is the vector of ones and $H(R) = -\sum_{i, j=1}^{M} R_{i,j} \log R_{i,j}$ is the entropy of $R$.
\setlength{\textfloatsep}{0.35cm}

\begin{algorithm}[t!]\small
\caption{\textsc{Sinkhorn}}\label{algo:sinkhorn}
\begin{algorithmic}
\STATE \textbf{Input:} $\{\tilde z_i\}_{i=1}^M \sim Q_Z$, $\{z_j\}_{j=1}^M \sim P_Z$, $\varepsilon$, $L$\\
\STATE $\forall {i,j}:~\tilde C_{ij} = \tilde c(\tilde z_i, z_j)$
\STATE $K = \exp({-\tilde C/\varepsilon})$, $u \leftarrow \textbf{1}$ \qquad  \# elem-wise exp\\
\STATE \textbf{repeat  until convergence, but at most} $L$ \textbf{times}:\\
\STATE \quad $v \leftarrow \textbf{1} / (K^\top u)$ \qquad ~~~ \# elem-wise division\\
\STATE \quad $u \leftarrow \textbf{1} / (Kv)$ \\ 
\STATE $R^* \leftarrow \mbox{Diag}(u)\,K\,\mbox{Diag}(v)$ \qquad ~~~ \# plus rounding step\\ 
\STATE \textbf{Output:} $R^*$, $\tilde C$. 
\end{algorithmic}
\end{algorithm}


\cite{cuturi} shows that the \textsc{Sinkhorn} Algorithm \ref{algo:sinkhorn} \citep{sinkhorn} returns its $\varepsilon$-regularized optimum $R^*$ (see Eq. \ref{eq:discrete-sink-argmin}) in the limit $L \to \infty$, which is also unique due to strong convexity of the entropy. The Sinkhorn algorithm is a fixed point algorithm that is much faster than the Hungarian algorithm: it runs in nearly $O(M^2)$ time \citep{altschuler2017near} and can be efficiently implemented with matrix multiplications; see Algorithm \ref{algo:sinkhorn}.
For better differentiability properties we deviate from Eq. \ref{reg-ot-prob-unbiased} and use
the \emph{unbiased sharp Sinkhorn loss} \citep{luise2018differential,geneway2017learning} by dropping the entropy terms (only) in the evaluations:
%
\begin{align}\label{eq:unbiased-sharp-sinkhorn}
\overline S_{\tilde c,\varepsilon}(\hat Q_Z, &\hat P_Z)    :=  \frac{1}{M}\langle R^* , \tilde C\rangle_F \nonumber\\
& - \frac{1}{2M}\left( \langle R^*_{\hat Q_Z} , \tilde C_{\hat Q_Z}\rangle_F + \langle R^*_{\hat P_Z} , \tilde C_{\hat P_Z}\rangle_F   \right),
\end{align}
where the indices $\hat Q_Z$, $\hat P_Z$ refer to Eq. \ref{eq:discrete-sink-argmin} applied to the samples from $Q_Z$ in both arguments
 and then $P_Z$ in both arguments, respectively.

Since this only deviates from Eq. \ref{reg-ot-prob-unbiased} in $\varepsilon$-terms we still have all the mentioned properties, e.g.\ that the optimum of this Sinkhorn distance approaches the optimum of the OT-cost with the stated rate \citep{geneway2017learning,cominetti1994asymptotic, weed2018explicit}. 
Furthermore, for numerical stability we use the Sinkhorn algorithm in log-space \citep{chizat2016scaling,schmitzer2016stabilized}. 
In order to round the $R$ that results from a finite number $L$ of Sinkhorn iterations to a doubly stochastic matrix, we use the procedure described Algorithm 2 of \citep{altschuler2017near}.

The smaller the $\varepsilon$, the smaller the entropy and the better the approximation of the OT-cost.  At the same time, a larger number of steps $O(L)$ is needed to converge, while the rate of convergence remains linear in $L$ \citep{geneway2017learning}.
Note that all Sinkhorn operations are differentiable. Therefore, when the distance is used as a cost function, we can unroll $O(L)$ iterations and backpropagate \citep{geneway2017learning}. 
In conclusion, we obtain a differentiable surrogate for OT-cost between empirical distributions; the approximation arises from sampling, entropy regularization and the finite amount of steps in place of convergence.

\subsection{TRAINING THE SINKHORN AUTOENCODER}

To train the Sinkhorn AutoEncoder with encoder $Q_A$, decoder $G_B$ and with weights $A$, $B$, resp., we sample minibatches $x=\{x_i\}_{i=1}^M$ from the data distribution $P_X$ and $z=\{z_i\}_{i=1}^M$ from the prior $P_Z$. After encoding $x$ we then run the \textsc{Sinkhorn} Algorithm \ref{algo:sinkhorn} three times 
(for $(x,z)$, $(x,x)$ and $(z,z)$) to find the optimal couplings and then compute the unbiased \textsc{SinkhornLoss} via Eq. 
\ref{eq:unbiased-sharp-sinkhorn}. Note that the $L$ \textsc{Sinkhorn} steps in Algorithm \ref{algo:sinkhorn} are differentiable.
 The weights can then be updated via (auto-)differentiation through the \textsc{Sinkhorn} steps (together with the gradient of the reconstruction loss). One training round is summarized in Algorithm \ref{algo:sae-training}. 
\begin{algorithm}[t!]\small
\caption{\textsc{SAE Training round}}\label{algo:sae-training}
\begin{algorithmic}
\STATE \textbf{Input:} encoder weights $A$, decoder weights $B$, $\varepsilon$, $L$, $\beta$
\STATE \textbf{Minibatch:} $x=\{x_i\}_{i=1}^M \sim P_X$, $z=\{z_j\}_{j=1}^M \sim P_Z$ \\
\STATE $\tilde z \leftarrow Q_A(x)$, $\tilde x \leftarrow G_B(\tilde z)$ \\
\STATE $D = \frac{1}{M}\|x - \tilde x\|^p_p $ \\
\STATE $S = \textsc{SinkhornLoss}(\tilde z, z,\varepsilon,L)$,~~~ \# 3\,x\,Alg.\,\ref{algo:sinkhorn} \!+\! Eq.\,\ref{eq:unbiased-sharp-sinkhorn} \\
\STATE \textbf{Update:} $A,B$ with gradient $\nabla_{(A,B)} (D + \beta \cdot S)$.
\end{algorithmic}
\end{algorithm}
Small $\varepsilon$ and large $L$ worsen the numerical stability of the Sinkhorn. In most experiments, both $c$ and $\tilde c$ will be $\|\cdot \|_2^2$.
Experimentally we found that the re-calculation of the three optimal couplings at each iteration is not a significant overhead. 

SAE can in principle work with arbitrary priors. The only requirement coming from the Sinkhorn is the ability to generate samples. The choice should be motivated by the desired geometric properties of the latent space.

\begin{figure*}[ht]
\vspace{-12pt}
 \centering
\begin{subfigure}{.18\textwidth}
  \centering
  \includegraphics[width=\linewidth]{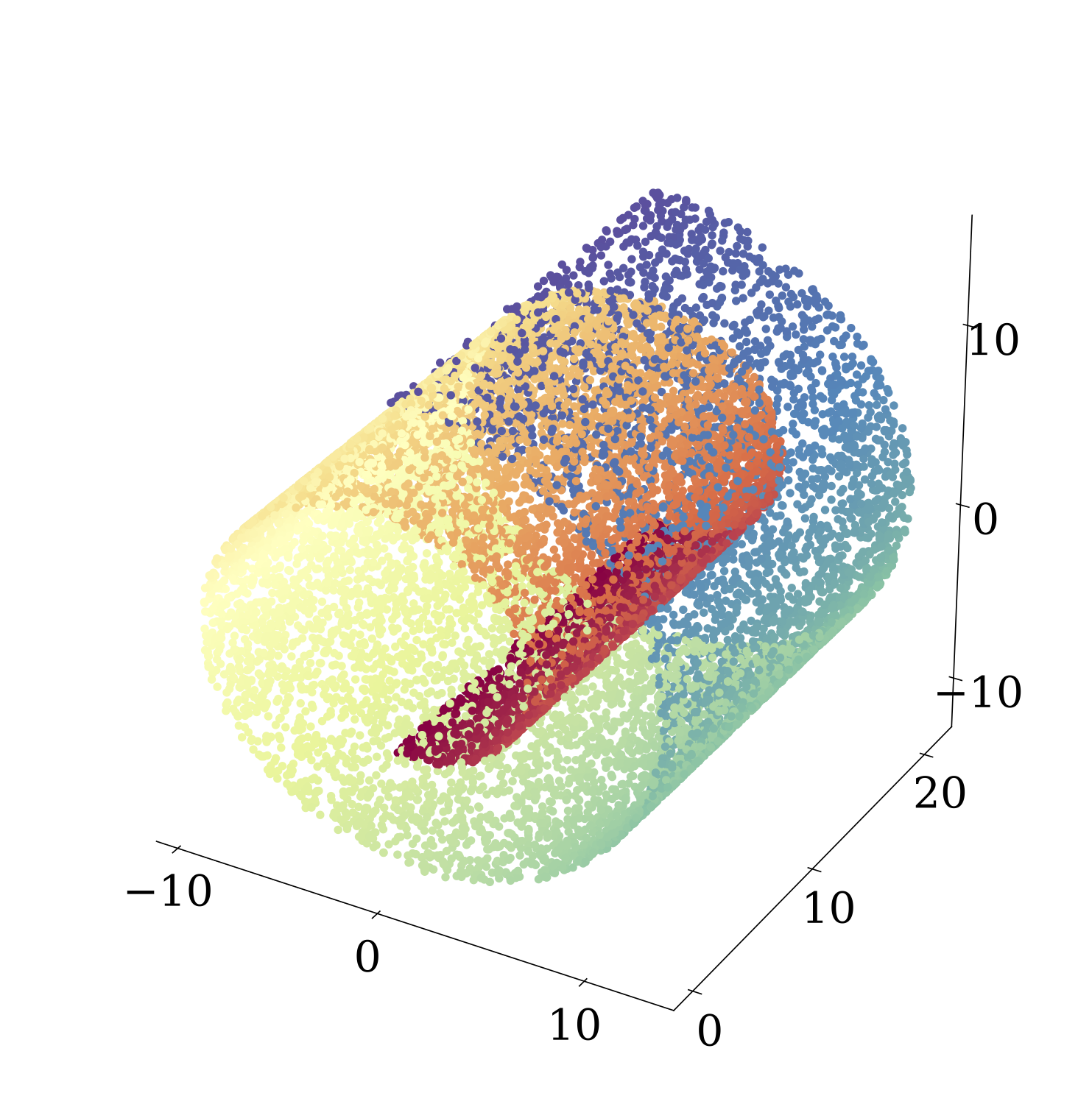}
  \caption{}
  \label{fig:roll1}
\end{subfigure}
\begin{subfigure}{.18\textwidth}
 \centering
 \includegraphics[width=\linewidth]{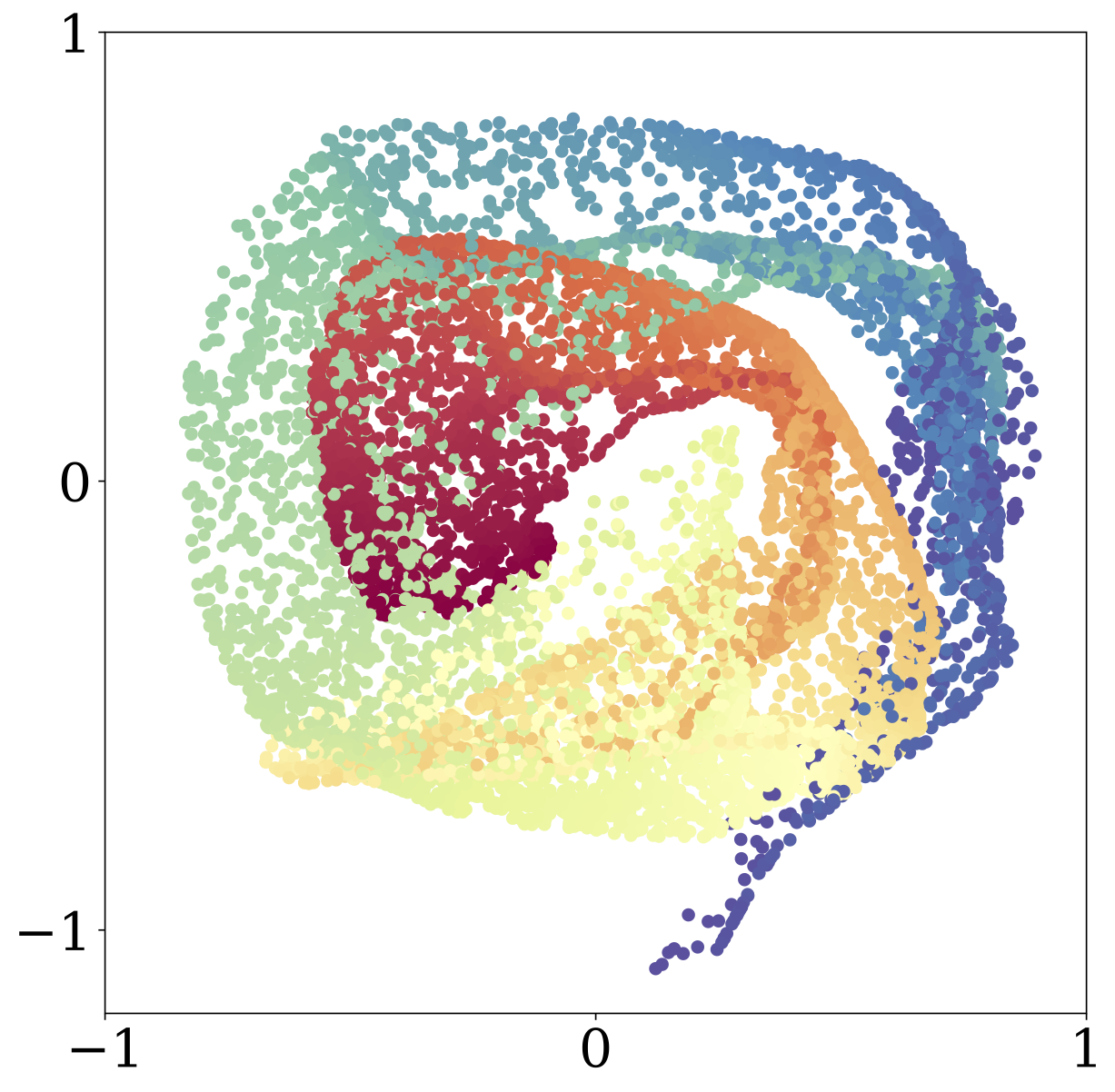}
 \caption{}
 \label{fig:roll2}
\end{subfigure}
\begin{subfigure}{.18\textwidth}
  \centering
  \includegraphics[width=\linewidth]{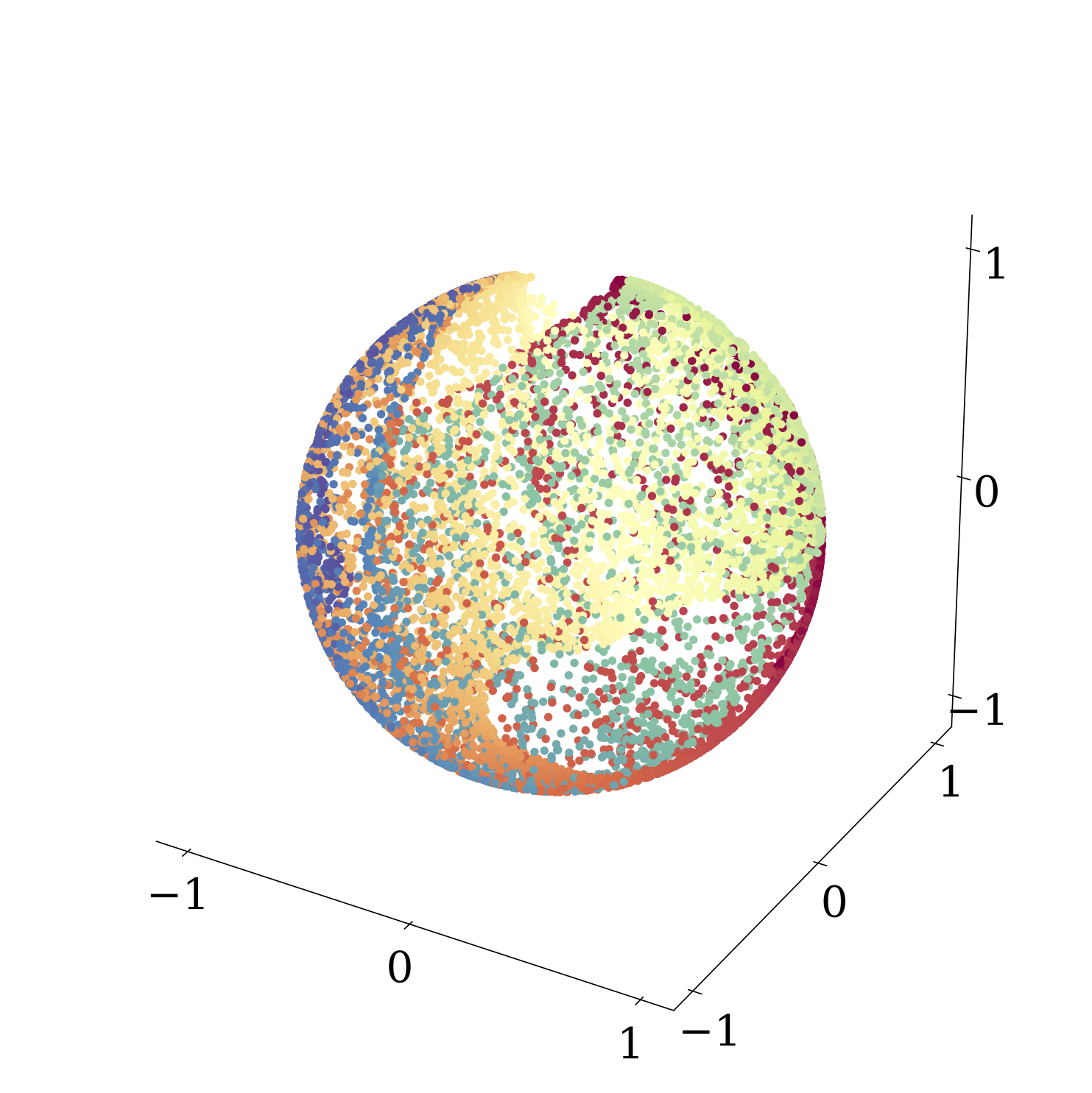}
  \caption{}
  \label{fig:roll3}
\end{subfigure}
\begin{subfigure}{.18\textwidth}
  \vspace{1pt}
  \centering
  \includegraphics[width=\linewidth]{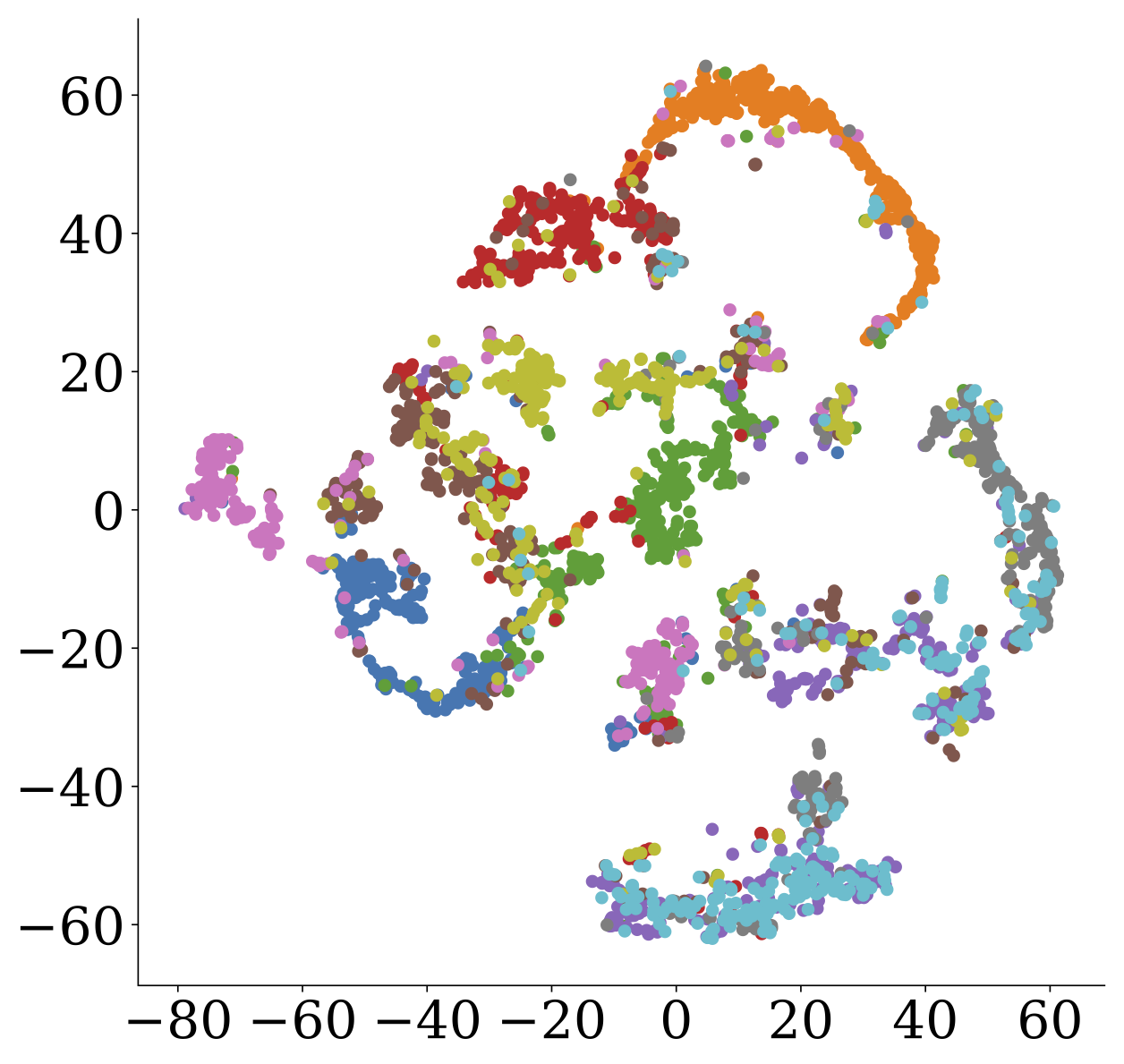}
  \caption{}
  \label{fig:roll4}
\end{subfigure}
 \begin{subfigure}{.18\textwidth}
   \vspace{1pt}
  \centering
  \includegraphics[width=\linewidth]{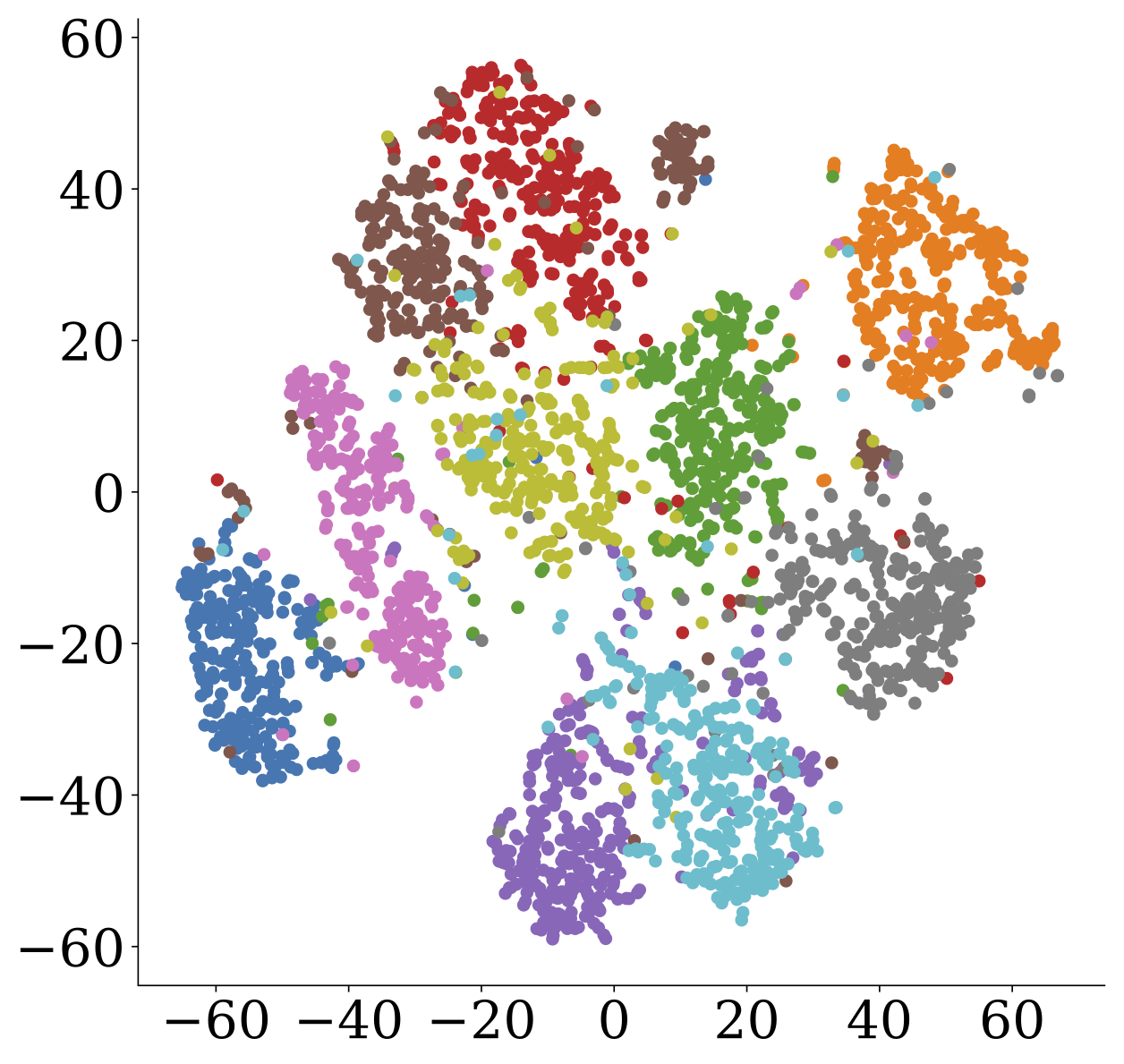}
  \caption{}
  \label{fig:simplex0}
 \end{subfigure}
 \vspace{-0.3cm}
\caption{a) Swiss Roll and its b) squared and c) spherical embeddings learned by Sinkhorn encoders. MNIST embedded onto a 10D sphere viewed through $t$-SNE, with classes by colours: d) encoder only or e) encoder + decoder.}
\vspace{-0.5cm}
\end{figure*}
\section{CLOSED FORM OF THE $2$-WASSERSTEIN DISTANCE}\label{sec:w2gae}

The $2$-Wasserstein distance $W_2(Q_Z,P_Z)$ has a closed form in Euclidean space if both $Q_Z$ and $P_Z$ are Gaussian 
(\cite{peyrecuturicomputational} Rem. 2.31):
\begin{align}
 W_2^2(&\mathcal{N}(\mu_1,\Sigma_1),\mathcal{N}(\mu_2,\Sigma_2)) = \|\mu_1-\mu_2\|_2^2 \nonumber\\
&+  \mathrm{tr}\left(\Sigma_1 + \Sigma_2 -2 \left(\Sigma_2^{\frac{1}{2}}\, \Sigma_1\, \Sigma_2^{\frac{1}{2}}  \right)^{\frac{1}{2}}   \right),
\end{align}
which will further simplify if $P_Z$ is standard Gaussian. Even though the aggregated posterior $Q_Z$ might not be Gaussian we use the above formula for matching and backpropagation, by estimating $ \mu_1$ and 
$\Sigma_1$ on minibatches of $Q_Z$ via the standard formulas:
$\hat \mu_1 := \frac{1}{M}\sum_{i=1}^M \tilde z_i$ and 
$\hat\Sigma_1: = \frac{1}{M-1} \sum_{i=1}^M ( \tilde z_i - \hat \mu_1) ( \tilde z_i - \hat \mu_1)^T $.
We refer to this method as W2GAE (Wasserstein Gaussian AutoEncoder).
We will compare this method against SAE and other baselines as discussed next in the related work section.

\section{RELATED WORK}

The Gaussian prior is common in VAE's for the reason of tractability. In fact, changing the prior and/or the approximate posterior distributions requires the use of tractable densities and the appropriate reparametrization trick. 
A hyperspherical prior is used by \cite{davidson2018hyperspherical} with improved experimental performance; the algorithm models a Von Mises-Fisher posterior, with a non-trivial posterior sampling procedure and a reparametrization trick based on rejection sampling. Our implicit encoder distribution sidesteps these difficulties. Recent advances on variable reparametrization can also simplify these requirements \citep{figurnov2018implicit}. 
We are not aware of methods embedding on probability simplices, except the use of Dirichlet priors by the same \cite{figurnov2018implicit}. 

\cite{hoffman2016elbo} showed that the objective of a VAE does not force the aggregated posterior and prior to match, and that the mutual information of input and codes may be minimized instead. Just like the WAE, SAE avoids this effect by construction. \cite{makhzani2015adversarial} and WAE improve latent matching by GAN/MMD. With the same goal, \cite{alemi2016deep} and \cite{tomczak2017vae} introduce learnable priors in the form of a mixture of posteriors, which can be used in SAE as well. 

The \cite{sinkhorn} algorithm gained interest after \cite{cuturi} showed its application for fast computation of Wasserstein distances. The algorithm has been applied to ranking \citep{adams2011ranking}, domain adaptation \citep{courty2014domain}, multi-label classification \citep{frogner}, metric learning \citep{huang2016supervised} and ecological inference \citep{muzellec2017tsallis}. \cite{santa2017deeppermnet, linderman2017reparameterizing} used it for supervised combinatorial losses. Our use of the Sinkhorn for generative modeling is akin to that of \cite{geneway2017learning}, which matches data and model samples with adversarial training, and to \cite{ambrogioni2018wasserstein}, which matches samples from the model joint distribution and a variational joint approximation.
WAE and WGAN objectives are linked respectively to primal and dual formulations of OT \citep{tolstikhin2018wasserstein}.

Our approach for training the encoder alone qualifies as self-supervised representation learning \citep{donahue2016adversarial, noroozi2016unsupervised, noroozi2017representation}. 
As in noise-as-target (NAT) \citep{bojanowski2017unsupervised} and in contrast to most other methods, we can sample pseudo labels (from the prior) independently from the input. In Appendix \ref{app:nat-compare} we show a formal connection with NAT.

Another way of estimating the $2$-Wasserstein distance in Euclidean space is the Sliced Wasserstein AutoEncoder (SWAE) \citep{kolouri2018sliced}. The main idea is to sample one-dimensional lines in Euclidean space and exploit the explicit form of the $2$-Wasserstein distance in terms of cumulative distribution functions in the one-dimensional setting. We will compare our methods to SWAE as well.
\begin{table*}[ht]
\vspace{-12pt}
\small
            \centering
            \begin{tabular}{lll|rrrr|rrrr}         
            \multicolumn{3}{c}{} & \multicolumn{4}{c} {MNIST} & \multicolumn{4}{c} {CelebA} \\ \hline 
              \textbf{method} & \textbf{prior} & \textbf{cost} & 
              \textbf{$\beta$} & \textbf{MMD}  & \textbf{RE} & \textbf{FID} &
              \textbf{$\beta$} & \textbf{MMD} & \textbf{RE} & \textbf{FID}\\ \hline
                VAE & $\mathcal N$ & KL & 1 
                 & 0.28 & 12.22 & {\bf 11.4}
                 & 1 &  0.20& 94.19 & {\bf55}\\
                $\beta$-VAE & $\mathcal N$  & KL & 0.1 
                 & 2.20 & 11.76 & 50.0 
                 & 0.1 &  0.21 & 67.80 & 65\\
                WAE & $\mathcal N$  & MMD & 100 
                 & 0.50 & 7.07 & 24.4 
                 &$2000^*$ & 0.21 & 65.45 & {\bf 58} \\
                SWAE & $\mathcal N$  & SW & 100
                 & 0.32 & 7.46 & 18.8
                 & 100 & 0.21&65.28 & 64\\
                W2GAE (ours) & $\mathcal N$ & $W^2_2$ 
                  & 1     &  0.67 &7.04 & 30.5
                 & 1 &  0.20 & 65.55& {\bf 58} \\
                 HAE (ours) & $\mathcal N$ & Hungarian & 100
                 & 5.79 & 11.84 & 16.8 
                 & 100 &32.09 & 84.51 & 293 \\
                SAE (ours) & $\mathcal N$ & Sinkhorn & 100
                 & 5.34 & 12.81 & 17.2
                 & 100 & 4.82&90.54& 187 \\
                \hline 
                HVAE$^\dagger$ &$\mathcal H$  & KL & 1 
                 & 0.25 & 12.73 & 21.5 
                 & - & - &- & - \\
                WAE & $\mathcal H$  & MMD & 100 
                 & 0.24 & 7.88 & 22.3
                 & $2000^*$  & 0.25& 66.54 & 59 \\
                SWAE & $\mathcal H$ & SW & 100
                 & 0.24 & 7.80 & 27.6
                 & 100& 0.41 & 63.64& 80 \\
                 HAE (ours) & $\mathcal H$ & Hungarian & 100
                 & 0.23 & 8.69 & {\bf 12.0}
                 &100& 0.26 & 63.49 & {\bf 58} \\
                SAE (ours) & $\mathcal H$& Sinkhorn & 100
                 & 0.25 & 8.59 & {\bf 12.5}
                &100 &  0.24 & 63.97 & {\bf 56} \\
            \end{tabular}
                \vspace{-0.2cm}
            \caption{Results of the autoencoding task. Top 3 results for the FID scores are indicated with boldface numbers. 
            We compute MMD in latent space to evaluate the matching between the aggregated posterior and prior. 
           MMD results are reported times $10^2$. Note that MMD scores are not comparable for different priors. 
            For SAE and the Gaussian prior, we used $\epsilon=10$ as lower values led to numerical instabilities. For the hypersphere we set $\epsilon=0.1$. *The value of $\beta=2000$ is similar to the value $\lambda=100$ as used in \citep{tolstikhin2018wasserstein}, as a prefactor of $0.05$ was used there for the reconstruction cost. $^\dagger$Comparing with \cite{davidson2018hyperspherical} in high-dimensional latent spaces turned out to be unfeasible, due to CPU-based computations.
            }
            \vspace{-0.5cm}
            \label{table:vae}
        \end{table*}
\section{EXPERIMENTS}
%
\subsection{REPRESENTATION LEARNING WITH SINKHORN ENCODERS}\label{exp:encoder}
We demonstrate qualitatively that the Sinkhorn distance is a valid objective for unsupervised feature learning by training the encoder in isolation. The task consists of embedding the input distribution in a lower dimensional space, while preserving the local data geometry and minimizing the loss function $L = \frac{1}{M} \langle R^* , \tilde C\rangle_F$, with $\tilde c(z,z') = \| z - z'\|_2^2$. Here $M$ is the minibatch size.

We display the representation of a 3D Swiss Roll and MNIST. 
For the Swiss Roll we set $\varepsilon = 10^{-3}$, while for MNIST it is set to $0.5$, and $L$ is picked to ensure convergence. 
For the Swiss roll (Figure \ref{fig:roll1}), we use a 50-50 fully connected network with ReLUs.
Figures \ref{fig:roll2}, \ref{fig:roll3} show that the local geometry of the Swiss Roll is conserved in the new representational spaces --- a square and a sphere. 
Figure \ref{fig:roll4} shows the $t$-SNE visualization \citep{maaten2008visualizing} of the learned representation of the MNIST test set. With neither labels nor reconstruction error, we learn an embedding that is aware of class-wise clusters.
Minimization of the Sinkhorn distance achieves this by encoding onto a $d$-dimensional hypersphere with a uniform prior, such that points are encouraged to map far apart.
A contractive force is present due to the inductive prior of neural networks, which are known to be Lipschitz functions. On the one hand, points in the latent space disperse in order to fill up the sphere; on the other hand, points close on image space cannot be mapped too far from each other. As a result, local distances are conserved while the overall distribution is spread.
When the encoder is combined with a decoder $G$ the contractive force is enlarged: they collaborate in learning a latent space which makes reconstruction possible despite finite capacity; 
see Figure \ref{fig:simplex0}.
\begin{figure*}[ht]
\vspace{-8pt}
\centering
\begin{subfigure}{.37\textwidth}
  \includegraphics[width=\linewidth]{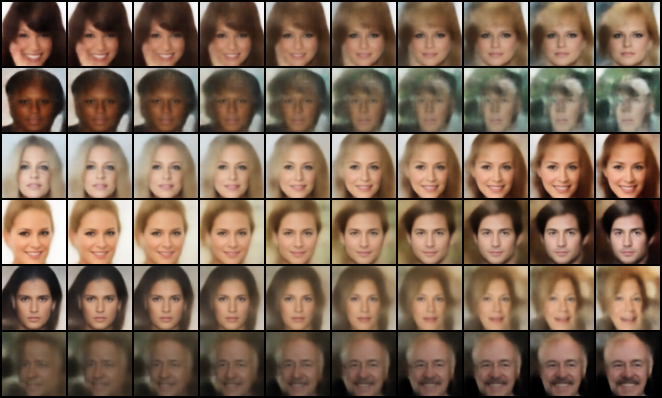}
\end{subfigure}
\begin{subfigure}{.37\textwidth}
  \includegraphics[width=\linewidth]{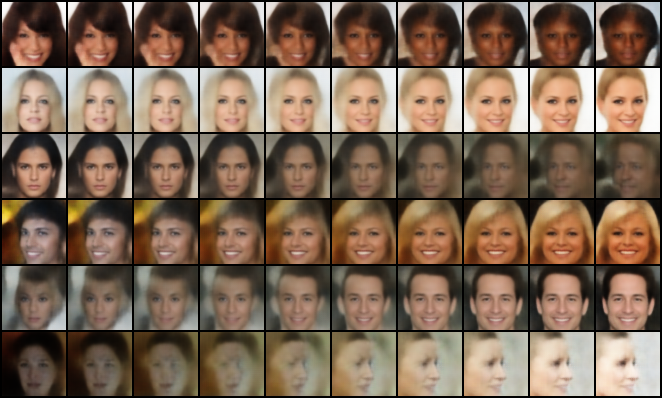}
\end{subfigure}
\begin{subfigure}{.22\textwidth}
  \includegraphics[width=\linewidth]{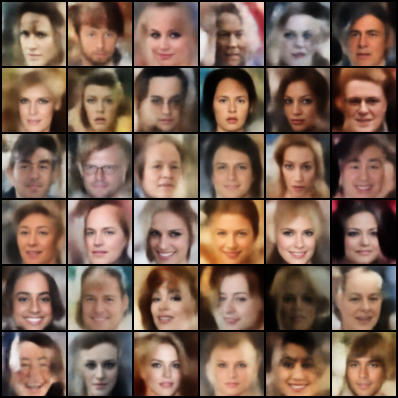}
\end{subfigure}
\begin{subfigure}{.37\textwidth}
  \includegraphics[width=\linewidth]{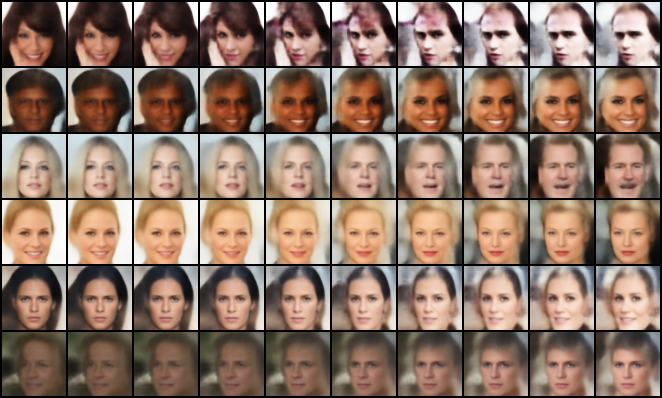}
\end{subfigure}
\begin{subfigure}{.37\textwidth}
  \includegraphics[width=\linewidth]{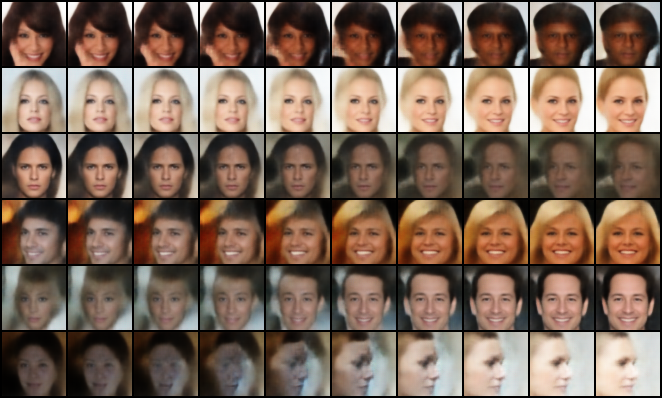}
\end{subfigure}
\begin{subfigure}{.22\textwidth}
  \includegraphics[width=\linewidth]{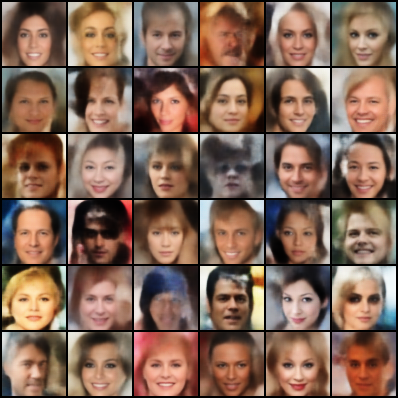}
\end{subfigure}
\vspace{-0.2cm}
\caption{From left to right: CelebA extrapolations, interpolations, and samples. Models from Table \ref{table:vae}: WAE with a Gaussian prior (top) and SAE with a uniform prior on the hypersphere (bottom).}
\label{fig:big1}
\vspace{-0.5cm}
\end{figure*}  
\begin{figure*}[!b] 
\vspace{-7pt}
 \centering
\begin{subfigure}{.21\textwidth}
  \centering
  \includegraphics[width=\linewidth]{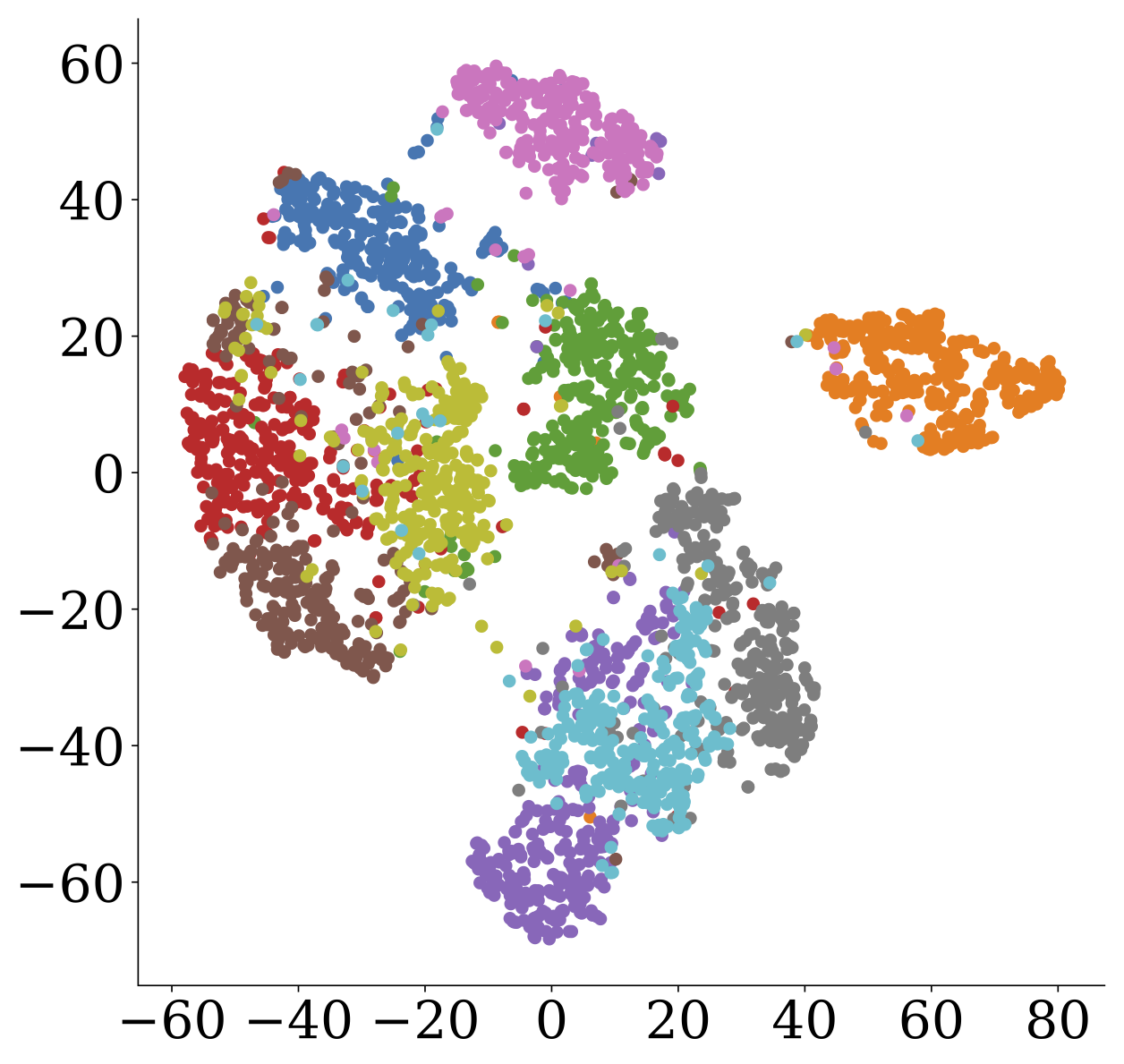}
  \caption{}
  \label{fig:simplex1}
\end{subfigure}
\hspace{1pt}
\begin{subfigure}{.21\textwidth}
 \centering
 \includegraphics[width=\linewidth]{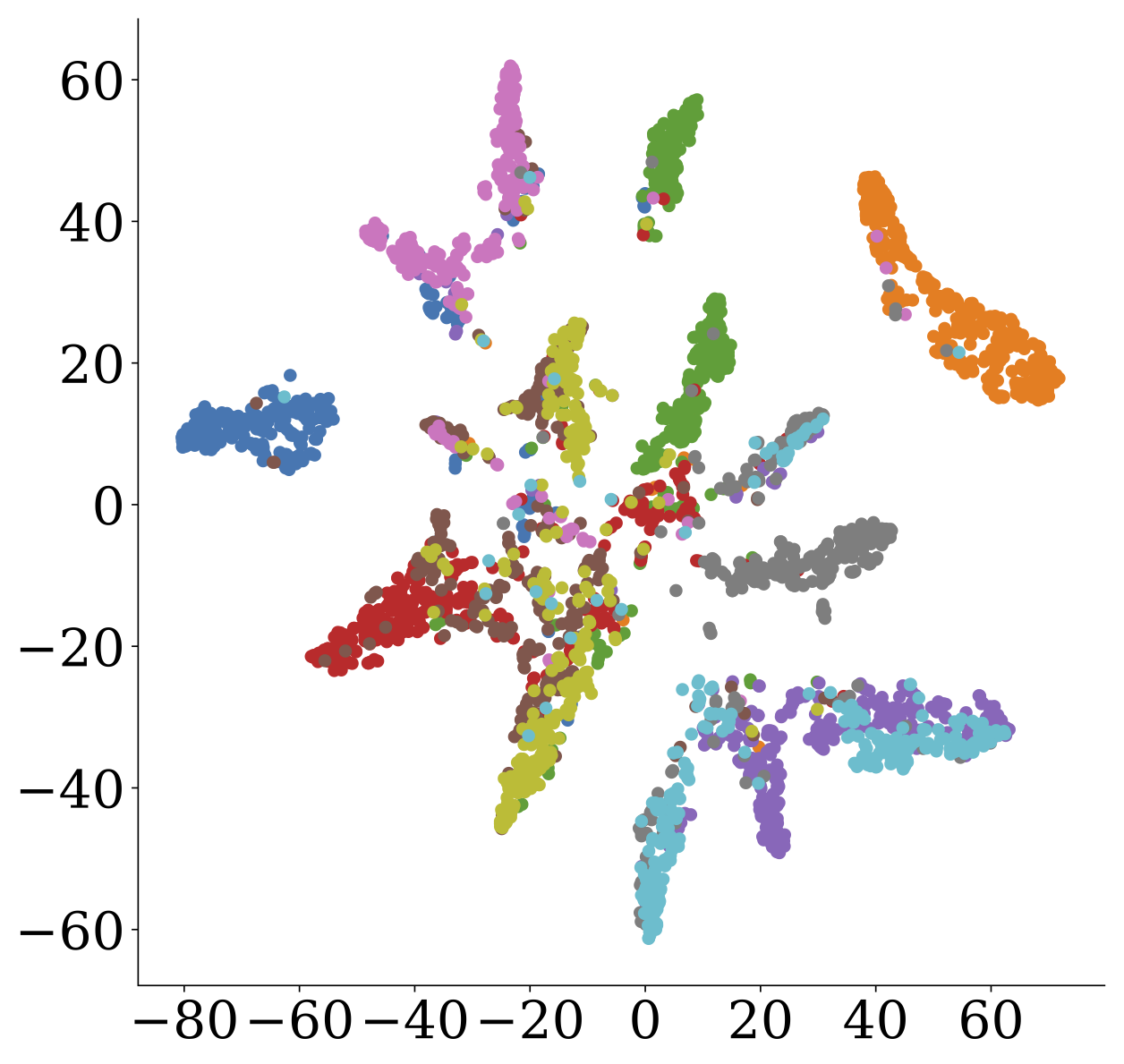}
 \caption{}
 \label{fig:simplex2}
\end{subfigure}
\hspace{1pt}
\begin{subfigure}{.22\textwidth}
 \centering
 \includegraphics[width=\linewidth]{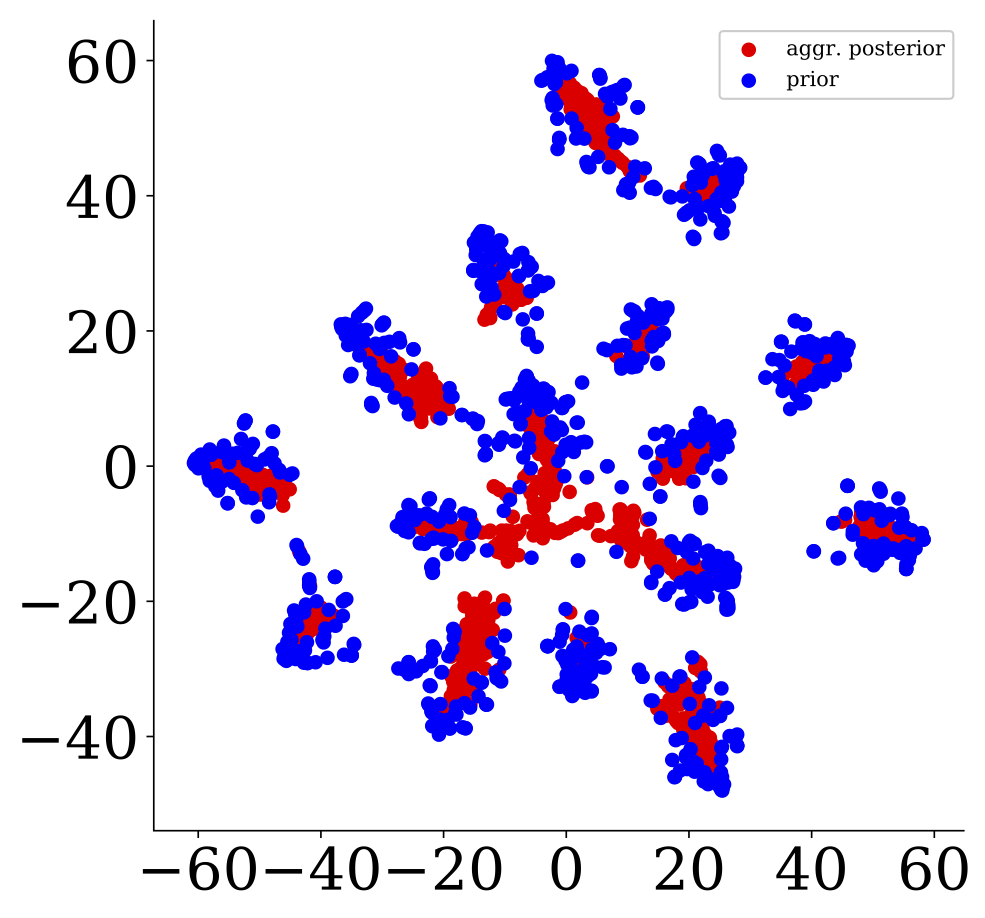}
 \caption{}
 \label{fig:simplex3}
\end{subfigure}
\hspace{4pt}
\begin{subfigure}{.25\textwidth}
  \centering
  \includegraphics[width=\linewidth]{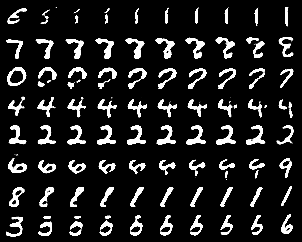}
  \caption{}
  \label{fig:simplex4}
\end{subfigure}
\begin{subfigure}{.0265\textwidth}
  \centering
  \includegraphics[width=\linewidth]{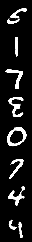}
  \caption{}
  \label{fig:simplex5}
\end{subfigure}
\vspace{-0.3cm}
\caption{$t$-SNEs of SAE latent spaces on MNIST: a) $10$-dim $\mathrm{Dir}(1/2)$ and b) $16$-dim $\mathrm{Dir}(1/5)$ priors. For the latter: c) aggregated posterior (red) vs. prior (blue), d) vertices interpolation and e) samples from the prior.}
\vspace{-0.6cm}
\end{figure*}

\subsection{AUTOENCODING EXPERIMENTS}\label{exp:generators}
      
For the autoencoding task we compare SAE against ($\beta$)-VAE, HVAE, SWAE and WAE-MMD. We furthermore denote the model that matches the samples in latent space with the Hungarian algorithm with HAE. Where compatible, all methods are evaluated both on the hypersphere and with a standard normal prior. Results from our proposed W2GAE method as discussed in section \ref{sec:w2gae} for Gaussian priors are shown as well. We compute FID scores \citep{heusel2017gans} on CelebA and MNIST. For MNIST we use LeNet as proposed in \citep{binkowski2018demystifying}.
For details on the experimental setup, see Appendix \ref{app:architectures}.
The results for MNIST and CelebA are shown in Table \ref{table:vae}.  Extrapolations, interpolations and samples of WAE and SAE for CelebA are shown in Fig. \ref{fig:big1}. Visualizations for MNIST are shown in Appendix \ref{sec:visualizations}. Interpolations on the hypersphere are defined on geodesics connecting points on the hypersphere. FID scores of SAE with a hyperspherical prior are on par or better than the competing methods. Note that although the FID scores for the VAE are slightly better than that of SAE/HAE, the reconstruction error of the VAE is significantly higher. Surprisingly, the simple W2GAE method is on par with WAE on CelebA.

For the Gaussian prior on CelebA, both HAE and SAE perform very poorly. In appendix \ref{app:highd} we analyzed the behaviour of the Hungarian algorithm in isolation for two sets of samples from high-dimensional Gaussian distributions. 
The Hungarian algorithm finds a better matching between samples from a smaller variance Gaussian with samples from the standard normal distribution. This behaviour gets worse for higher dimensions, and also occurs for the Sinkhorn algorithm.
This might be due to the fact that most probability mass of a high-dimensional isotropic Gaussian with standard deviation $\sigma$ lies on a thin annulus at radius $\sigma \sqrt{d}$ from its origin. For a finite number of samples the $L_2^2$ cost function can lead to a lower matching cost for samples between two annuli of different radii. This effect leads to an encoder with a variance lower than one. When sampling from the prior after training, this yields saturated sampled images. See Appendix \ref{sec:visualizations} for reconstructions and samples for HAE with a Gaussian prior on CelebA. Note that neither SWAE and W2GAE suffer from this problem in our experiments, even though these methods also provide an estimate of the 2-Wasserstein distance. For W2GAE this problem does start at even higher dimensions (Appendix \ref{app:highd}).


\subsection{DIRICHLET PRIORS}\label{exp:prior-decouple}

We further demonstrate the flexibility of SAE by using Dirichlet priors on MNIST. The prior draws samples on the probability simplex; hence we constrain the encoder by a final softmax layer. We use priors that concentrate on the vertices with the purpose of clustering the digits. A $10$-dimensional $\mathrm{Dir}(1/2)$ prior (Figure \ref{fig:simplex1}) results in an embedding qualitatively similar to the uniform sphere (\ref{fig:simplex0}).
With a more skewed prior $\mathrm{Dir}(1/5)$, the latent space could be organized such that each digit is mapped to a vertex, with little mass in the center. We found that in dimension $10$ this is seldom the case, as multiple vertices can be taken by the same digit to model different styles, while other digits share the same vertex.
 We therefore experiment with a $16$-dimensional $\mathrm{Dir}(1/5)$, which yields more disconnected clusters (\ref{fig:simplex2}); the effect is evident when showing the prior and the aggregated posterior that tries to cover it (\ref{fig:simplex3}). 
Figure \ref{fig:simplex4} (leftmost and rightmost columns) shows that every digit $0-9$ is indeed represented on one of the 16 vertices, while some digits are present with multiple styles, e.g. the $7$.
The central samples in the Figure are the interpolations obtained by sampling on edges connecting vertices -- no real data is autoencoded.
Samples from the vertices appear much crisper than other prior samples (\ref{fig:simplex5}), a sign of mismatch between prior and aggregated posterior on areas with lower probability mass.
Finally, we could even learn the Dirichlet hyperparameter(s) with a reparametrization trick \citep{figurnov2018implicit} and let the data inform the model on the best prior.

\section{CONCLUSION}

We introduced a generative model built on the principles of Optimal Transport. Working with empirical Wasserstein distances and deterministic networks provides us with a flexible likelihood-free framework for latent variable modeling. 


\newpage

\bibliography{references}

\bibliographystyle{apalike}

\clearpage


\appendix 

\section{RESTRICTING TO DETERMINISTIC AUTOENCODERS}\label{proof:mainth1}

We first improve the characterization of Equation \ref{main}, which is formulated in terms of stochastic encoders $Q(Z|X)$ and deterministic decoders $G(X|Z)$. In fact, it is possible to restrict the learning class to that of deterministic encoders and deterministic decoders and thus to fully deterministic \emph{autoencoders}:
\begin{thm}\label{mainth}
Let $P_X$ be not atomic and $G(X|Z)$ deterministic. Then for every continuous cost $c$:
\begin{equation*}
W_c(P_X,P_G) = \inf_{\substack{Q~\text{det. }:\\Q_Z = P_Z}} \E_{X \sim P_X}[c(X,G(Q(X)))].
\end{equation*} 
Using the cost $c(x,y) = \|x-y\|_p^p$, the equation holds with $W_p^p(P_X,P_G)$ in place of $W_c(P_X,P_G)$.
\end{thm}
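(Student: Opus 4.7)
The plan is to start from Eq.~\ref{main}, which expresses $W_c(P_X,P_G)$ as an infimum over \emph{stochastic} encoders $Q(Z|X)$ subject to $Q_Z = P_Z$, and reduce this to an infimum over \emph{deterministic} encoders by invoking the Monge--Kantorovich equivalence \citep{villani2008optimal}. First I would note that the assignment $Q(Z|X)\mapsto \Gamma_Q := Q(Z|X)\,P_X$ is a bijection between the set of conditionals $Q(Z|X)$ with $Q_Z=P_Z$ and the set of couplings $\Pi(P_X,P_Z)$: the forward direction is by definition of the aggregated posterior in Eq.~\ref{aggregate}, the backward direction is the disintegration of measures, which is available because the underlying spaces are Polish. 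Setting $\tilde c(x,z) := c(x,G(z))$ (continuous since both $c$ and $G$ are), the right-hand side of Eq.~\ref{main} rewrites as the Kantorovich optimal transport cost
\[ \inf_{\Gamma\in\Pi(P_X,P_Z)} \E_{(X,Z)\sim\Gamma}\bigl[\tilde c(X,Z)\bigr]. \]

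Second, I would apply the Monge--Kantorovich equivalence: when the source measure $P_X$ is non-atomic and $\tilde c$ is continuous, the Kantorovich infimum coincides with the Monge infimum,
\[ \inf_{\Gamma\in\Pi(P_X,P_Z)} \E_\Gamma\bigl[\tilde c(X,Z)\bigr] \;=\; \inf_{T_\# P_X = P_Z} \E_{X\sim P_X}\bigl[\tilde c(X,T(X))\bigr]. \]
A measurable transport map $T:\mathcal{X}\to\mathcal{Z}$ with $T_\# P_X = P_Z$ is exactly what the theorem calls a deterministic encoder $Q$ satisfying $Q_Z = P_Z$, and $\tilde c(x,T(x)) = c(x,G(T(x)))$, so this yields the claimed identity. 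The $p$-Wasserstein specialization is then immediate: taking $c(x,y) = \|x-y\|_p^p$ makes $W_c = W_p^p$ on the left, while the integrand on the right becomes the pointwise reconstruction cost $\|X-G(Q(X))\|_p^p$.

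The main obstacle is the Monge--Kantorovich step itself. The classical result one needs, essentially due to Pratelli (extending Gangbo--McCann in the Euclidean case), states that if $P_X$ has no atoms and $\tilde c$ is continuous, then deterministic couplings---graphs of $P_X$-measurable maps pushing $P_X$ forward to $P_Z$---are dense in $\Pi(P_X,P_Z)$ in the narrow topology, so the Monge and Kantorovich infima agree even when no optimal Monge map exists. A secondary technical point is integrability of $\tilde c$ on the possibly non-compact spaces $\mathcal{X},\mathcal{Z}$; this is handled in the standard way, either by truncating the cost and taking a monotone limit, or by restricting to the regime in which the original objective $W_c(P_X,P_G)$ is finite, so that no mass escapes to infinity along an approximating sequence. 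Once these measure-theoretic subtleties are discharged, the identification $Q \leftrightarrow T$ together with substitution into Eq.~\ref{main} finishes the proof.
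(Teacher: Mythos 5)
Your proof is correct, but it follows a genuinely different route from the paper's. You start from the WAE identity (Eq.~\ref{main}), pull the cost back through the decoder to define $\tilde c(x,z) = c(x,G(z))$ on $\gX\times\gZ$, identify stochastic encoders with $Q_Z=P_Z$ as exactly the couplings in $\Pi(P_X,P_Z)$ via disintegration, and then invoke the Monge--Kantorovich equivalence \emph{for the latent-space transport problem} (from $P_X$ to $P_Z$ with cost $\tilde c$). The paper instead proves the theorem from scratch, without assuming Eq.~\ref{main}: it shows the set equality $\{G\circ Q : Q_\# P_X = P_Z\} = \{T : T_\# P_X = P_G\}$ by constructing a measurable right inverse $\widetilde G$ of $G$ on the relevant set, and then applies Monge--Kantorovich \emph{in data space} (from $P_X$ to $P_G$ with the original cost $c$). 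Each route has a technical subtlety the other avoids. Yours requires $\tilde c$ to be continuous and hence needs $G$ continuous (not just measurable) for the M--K equality to apply as stated; it also treats Eq.~\ref{main} as a black box, which means the paper's Appendix~\ref{proof:consequence} (deducing Eq.~\ref{main} from Theorem~\ref{mainth}) would be circular in your framework, though this is harmless since Eq.~\ref{main} is independently established in \cite{tolstikhin2018wasserstein}. The paper's route only needs continuity of $c$, and is self-contained so that Eq.~\ref{main} can be derived as a genuine corollary; but it leans on the existence of a \emph{measurable} right inverse of $G$, which the paper waves through as ``standard set theory'' when in fact it requires a measurable-selection argument (e.g.\ Jankov--von Neumann) rather than bare choice. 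Your version sidesteps that selection issue entirely, which is a real advantage; you also correctly flag the key ingredient (density of Monge maps among couplings for non-atomic source, due essentially to Pratelli) and the integrability caveat, both of which the paper treats lightly.
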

The statement is a direct consequence of the equivalence between the Kantorovich and Monge formulations of OT \citep{villani2008optimal}. We remark that this result is stronger than, and can be used to deduce Equation \ref{main}; see \ref{proof:consequence} for a proof.

The basic tool to prove Theorem \ref{mainth} is the equivalence between Monge and Kantorovich formulation of optimal transport. For convenience we formulate its statement and we refer to \cite{villani2008optimal} for a more detailed explanation. 

\begin{thm}[Monge-Kontorovich equivalence]\label{monge}
Given $P_X$ and $P_Y$ probability distributions on $\mathcal{X}$  such that $P_X$ is not atomic, $c:\mathcal{X} \times \mathcal{X} \rightarrow \R$ continuous, we have
\begin{equation}
W_c(P_X,P_Y) = \inf_{\substack{T: \mathcal{X} \rightarrow \mathcal{X}: \\ T_\# P_X = P_Y}} \int_{\mathcal{X}} c(x,T(x)) \, dP_X(x).
\end{equation}

\end{thm}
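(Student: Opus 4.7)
The plan is to prove the two inequalities separately. The easy direction $W_c(P_X,P_Y) \le \inf_T \int c(x,T(x))\,dP_X(x)$ is immediate: any measurable $T : \mathcal{X} \to \mathcal{X}$ with $T_\# P_X = P_Y$ induces the deterministic coupling $\Gamma_T := (\mathrm{id} \times T)_\# P_X \in \Pi(P_X,P_Y)$, and a change of variables yields $\E_{(X,Y) \sim \Gamma_T}[c(X,Y)] = \int_{\mathcal{X}} c(x,T(x))\,dP_X(x)$. Taking the infimum over all such $T$ gives the bound.

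The content is in the reverse inequality: every Kantorovich coupling $\Gamma \in \Pi(P_X,P_Y)$ must be approximated (in cost) by deterministic couplings. The cleanest route is Pratelli's density theorem (Villani 2008, Ch.~4), which asserts that when $P_X$ is non-atomic and $c$ is continuous, the set $\{\Gamma_T : T_\# P_X = P_Y\}$ is dense in $\Pi(P_X,P_Y)$ for the topology that makes integration against $c$ continuous (weak convergence together with tightness/uniform integrability of $c$). Applied to an $\varepsilon$-optimal Kantorovich plan, this produces maps $T_n$ with $\int c(x,T_n(x))\,dP_X(x) \to \E_\Gamma[c]$, whence $\inf_T \int c(x,T(x))\,dP_X(x) \le W_c(P_X,P_Y)$.

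If one wants to see \emph{why} Pratelli's approximation works rather than cite it, I would proceed constructively in two steps. First, using that any non-atomic Borel probability measure on a Polish space is isomorphic to Lebesgue measure on $[0,1]$, disintegrate $\Gamma = \int_{\mathcal{X}} \delta_x \otimes \Gamma^x\,dP_X(x)$ and, on each fibre, simulate $\Gamma^x$ by a measurable map from $[0,1]$ into $\mathcal{X}$ (a measurable selection / noise-outsourcing argument). Second, glue these fibrewise maps into a global $T_n : \mathcal{X} \to \mathcal{X}$ by partitioning $\mathcal{X}$ into finer and finer pieces (possible because $P_X$ has no atoms) so that the ``noise coordinate'' is absorbed into the spatial variable; continuity of $c$ plus weak convergence of $\Gamma_{T_n} \to \Gamma$ then controls the cost in the limit.

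The main obstacle, and where non-atomicity is essential, is the very last gluing step: if $P_X$ had an atom of mass $m$ at some point $x_0$ and the optimal Kantorovich plan split that mass between two targets $y_1 \neq y_2$, no deterministic map could reproduce the split, and a strict gap between Monge and Kantorovich values could open. Non-atomicity lets one partition the atom-free $x_0$-neighbourhood into arbitrarily many subsets of prescribed mass and route each one to a different $y_i$, recovering the split in the limit. Once this obstacle is handled, the extension to the stated cost $c(x,y) = \|x-y\|_p^p$ and to the $W_p$-formulation (raising to the $1/p$ power) is a routine consequence.
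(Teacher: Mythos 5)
The paper does not prove this statement at all: it formulates it as a black-box result and cites Villani (2008) for the proof, using it only as a tool in the proof of Theorem~\ref{mainth}. There is therefore no paper proof to compare against.

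Your sketch is a correct account of the standard argument (essentially the Ambrosio--Pratelli density result, as presented in Villani, Ch.~4--5): the easy inequality via deterministic couplings $(\mathrm{id}\times T)_\# P_X$, the hard inequality via density of Monge plans among Kantorovich plans when $P_X$ is non-atomic, and the gluing/noise-outsourcing heuristic for why the construction works. You also correctly locate the role of non-atomicity (splitting mass from a fibre). Two minor cautions if this were to be turned into a full proof. First, as you note in passing, weak convergence $\Gamma_{T_n}\to\Gamma$ alone only gives lower semicontinuity of $\int c\,d\Gamma$; one needs uniform integrability of $c$ with respect to the approximating plans (or $c$ bounded, or continuous with compactly controlled moments) to get $\int c\,d\Gamma_{T_n}\to\int c\,d\Gamma$. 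The paper's statement is silent on this and implicitly assumes whatever moment conditions make both sides finite. Second, the ``routine'' extension to $c(x,y)=\|x-y\|_p^p$ and the $W_p$ form by taking $p$-th roots is indeed trivial once the $W_c$ identity holds, but that part belongs to Theorem~\ref{mainth}, not to the statement you are proving here.
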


We are now in position to prove Theorem \ref{mainth}. We will prove it for a general continuous cost $c$.

\emph{Proof of Theorem \ref{mainth}}.
Notice that as the encoder $Q(Z|X)$ is deterministic there exists $Q:\mathcal{X} \rightarrow \mathcal{Z}$ such that $Q_Z = Q_\# P_X$ and $Q(Z|X) = \delta_{\{Q(x) = z\}}$. Hence
\begin{align*}
\E_{X \sim P_X} & \E_{Z \sim Q(Z|X)}[c(X,G(Z))] \\
& =  \int_{\mathcal{X} \times \mathcal{Z}} c(x,G(z)) \, dP_X(x) d\delta_{\{Q(x) = z\}}(z) \\
&= \int_{\mathcal{X}} \, dP_X(x) \int_{\mathcal{Z}} c(x,G(z)) d\delta_{\{Q(x) = z\}}(z)\\
&= \int_{\mathcal{X}}  c(x,G(Q(x))) \, dP_X(x)\,.
\end{align*}
Therefore
\begin{align*}
\inf_{\substack{Q \text{ det. }:\\Q_Z = P_Z}}& \E_{X \sim P_X}\E_{Z \sim Q(Z|X)}[c(X,G(Z))] \\
& = \inf_{\substack{Q:\mathcal{X}\rightarrow \mathcal{Z}\\ Q_Z = P_Z}}\int_{\mathcal{X}}  c(x,G(Q(x))) \, dP_X\,.
\end{align*}

We now want to prove that
\begin{align}\label{claim}
\{G \circ Q : Q_\#P_X = P_Z\} = \{T: \mathcal{X} \rightarrow \mathcal{X} : T_\#P_X = P_G\}\,.
\end{align}

For the first inclusion $\subset$ notice that for every $Q : \mathcal{X} \rightarrow \mathcal{Z}$ such that $Q_Z = P_Z$ we have that $G\circ Q : \mathcal{X} \rightarrow\mathcal{X}$ and 
\begin{align*}
(G\circ Q)_{\#}P_X = G_\# Q_\# P_X = G_\# P_Z \,.
\end{align*}
For the other inclusion $\supset$ consider $T: \mathcal{X} \rightarrow \mathcal{X}$ such that $T_\# P_X = P_G= G_\# P_Z$. We want first to prove that there exists a set $A \subset \mathcal{X}$ with $P_X(A) = 1$  such that $G: \mathcal{Z} \rightarrow  T(A)$ is surjective. Indeed if it does not hold there exists $B \subset \mathcal{X}$ with $P_X(B) > 0$ and $G^{-1}(T(B)) = \emptyset$. Hence
\begin{align*}
0 = G_\# P_Z(T(B)) = T_\# P_X(T(B)) = P_X(B) > 0
\end{align*}
that is a contraddiction.
Therefore by standard set theory the map $G: \mathcal{Z} \rightarrow  T(A)$ has a right inverse that we denote by $\widetilde G$.
Then define $Q=\widetilde G \circ T$. Notice that $G\circ Q = G\circ \widetilde G \circ T = T$ almost surely in $P_X$ and also
\begin{align*}
(\widetilde G \circ T)_\#P_X = P_Z\,. 
\end{align*}
Indeed for any $A \subset \mathcal{Z}$ Borel we have
\begin{align*}
(\widetilde G \circ T)_\# P_X (A) & =  (\widetilde G \circ G)_\# P_Z (A) \\
& = P_Z(\widetilde G^{-1}(G^{-1}(A)) \\
& = P_Z(A)\,. 
\end{align*}
This concludes the proof of the claim in (\ref{claim}). Now we have
\begin{align*}
\inf_{\substack{Q:\mathcal{X}\rightarrow \mathcal{Z}\\ Q_\#(P_X) = P_Z}}& \int_{\mathcal{X}}  c(x,G(Q(x))) \, dP_X(x)= \\
& = \inf_{\substack{T:\mathcal{X}\rightarrow \mathcal{X}\\ T_\#(P_X) = P_G}}\int_{\mathcal{X}}  c(x,T(x)) \, dP_X(x)\,. 
\end{align*}
Notice that this is exactly the Monge formulation of optimal transport. Therefore by Theorem \ref{monge} we conclude that
\begin{align*}
\inf_{\substack{ Q \text{ det.} :\\ Q_Z = P_Z}} & \E_{X \sim P_X}\E_{Z \sim Q(Z|X)}[c(X,G(Z))]= \\
& = \inf_{ \Gamma \in \Pi(P_X,P_G)} \E_{(X,Y) \sim \Gamma}[ c(X,Y) ]
\end{align*}
as we aimed. \qed

\section{WAE AS A CONSEQUENCE}\label{proof:consequence}

The following proof will show that we get the WAE objective from equation \ref{main} \citep{tolstikhin2018wasserstein} as a consequence from \ref{mainth}.

\begin{proof}
Thanks to Theorem \ref{mainth} we can prove easily Equation \ref{main}. Indeed:
\begin{align*}
 &\qquad W_c(P_X,P_G)  \\
 =  & \inf_{\substack{Q \text{ det. }:\\ Q_Z = P_Z}} \E_{X \sim P_X}\E_{Z \sim Q(Z|X)}[c(X,G(Z))]\\
 \geq & \inf_{\substack{Q(Z|X):\\ Q_Z = P_Z}} \E_{X \sim P_X}\E_{Z \sim Q(Z|X)}[c(X,G(Z))]\,.
\end{align*}
For the opposite inequality given $Q(Z|X)$ such that 
$P_Z = \int Q(Z|X) dP_X$ define $Q(X,Y) = P_X \otimes [G_\#Q(Z|X)]$. It is a distribution on $\mathcal{X} \times \mathcal{X}$ and it is easy to check that $\pi^1_{\#} Q(X,Y) = P_X$ and $\pi^2_{\#} Q(X,Y) = G_\#P_Z$, where $\pi^1$ and $\pi^2$ are the projection on the first and the second component. Therefore
\begin{align*}
\{Q(Z|X) \mbox{ such that } Q_Z = P_Z\} \subset   \Pi(P_X,P_G) 
\end{align*}
and so
\begin{align*}
W_c(P_X,P_G) & \leq \inf_{Q(Z|X): Q_Z = P_Z} \int_{\mathcal{X} \times \mathcal{X}} c(x,y) \, dQ(x,y)\\
& = \int_{\mathcal{X}} \left[\int_{\mathcal{X}} c(x,y) \, d G_\# Q(Z|X)(y)\right] \, dP_X\\
 & =  \int_{\mathcal{X}} \left[\int_{\mathcal{X}} c(x,G(z)) \, d Q(Z|X)(z)\right] \, dP_X\,.
\end{align*}
\end{proof}

\section{PROOF OF THE MAIN THEOREM}\label{proof:propsuff}

In this section we finally use Theorem \ref{proof:mainth1} to prove Theorem \ref{holy-grail-app}. In what follows $d(\cdot,\cdot)$ denotes a distance. 
As a preliminary Lemma, we prove a Lipschitz property for the Wasserstein distance $W_p$. 

\begin{lemma}\label{wasser}
For every $P_X,P_Y$ distributions on a sample space $\mathcal{S}$  and a Lipschitz map $F$ (with respect to $d$) we have that
\begin{displaymath}
W_p(F_\#P_X,F_\#P_Y) \leq \gamma \cdot W_p(P_X,P_Y)\,,
\end{displaymath}
where $\gamma$ is the Lipschitz constant of $F$.
\end{lemma}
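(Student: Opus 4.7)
The plan is to prove this via the Kantorovich formulation of $W_p$ by transporting an optimal plan between $P_X$ and $P_Y$ forward through $F$ to obtain a (generally suboptimal) plan between $F_\# P_X$ and $F_\# P_Y$, then exploiting the Lipschitz inequality pointwise inside the integral.

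First, I would fix $\Gamma \in \Pi(P_X, P_Y)$ arbitrary and consider the push-forward $(F \times F)_\# \Gamma$ on $\mathcal{S} \times \mathcal{S}$. A one-line check on Borel rectangles shows that its marginals are $F_\# P_X$ and $F_\# P_Y$ respectively, so $(F \times F)_\# \Gamma \in \Pi(F_\# P_X, F_\# P_Y)$. Hence by definition of $W_p$,
\begin{equation*}
W_p(F_\# P_X, F_\# P_Y)^p \;\leq\; \int_{\mathcal{S}\times\mathcal{S}} d(u,v)^p \, d\bigl((F\times F)_\# \Gamma\bigr)(u,v) \;=\; \int_{\mathcal{S}\times\mathcal{S}} d(F(x),F(y))^p \, d\Gamma(x,y),
\end{equation*}
where the last equality is the change-of-variables formula for push-forwards.

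Next, I would apply the Lipschitz property of $F$ pointwise, $d(F(x),F(y))^p \leq \gamma^p d(x,y)^p$, to obtain
\begin{equation*}
W_p(F_\# P_X, F_\# P_Y)^p \;\leq\; \gamma^p \int_{\mathcal{S}\times\mathcal{S}} d(x,y)^p \, d\Gamma(x,y).
\end{equation*}
Taking the infimum over $\Gamma \in \Pi(P_X, P_Y)$ and then the $p$-th root finishes the proof: $W_p(F_\# P_X, F_\# P_Y) \leq \gamma \cdot W_p(P_X, P_Y)$.

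There is no real obstacle here; the only subtlety to mention is that the argument works verbatim whether or not the infimum defining $W_p(P_X, P_Y)$ is attained (in case of non-attainment, one selects a minimizing sequence $\Gamma_n$ and passes to the limit on the right-hand side). This lemma is exactly the data processing inequality invoked in step (iii) of the sketch of Theorem \ref{holy-grail-app}, specialized later to $F = G$ with $\gamma = \|G\|_{\mathrm{Lip}}$.
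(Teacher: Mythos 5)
Your proof is correct and follows exactly the same route as the paper's: push an arbitrary coupling forward through $F \times F$ to land in $\Pi(F_\# P_X, F_\# P_Y)$, apply change of variables, use the Lipschitz bound pointwise, and take the infimum and $p$-th root. The only cosmetic difference is that you spell out the marginal check and the minimizing-sequence remark, which the paper leaves implicit.
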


\begin{proof}
Recall that 
\begin{align*}
& W_p(F_\#P_X,F_\#P_Y)^p  \\
& = \inf_{\Gamma \in \Pi(F_\#P_X,F_\#P_Y)} \int_{\mathcal{S} \times \mathcal{S}} d(x,y)^p\, d\Gamma(x,y).
\end{align*}
Notice then that for every $\Gamma \in \Pi(P_X,P_Y)$ we have that $(F \times F)_\# \Gamma  \in \Pi(F_\#P_X, F_\# P_Y)$. Hence
\begin{align}\label{cont}
\{(F \times F)_\# \Gamma : \Gamma \in \Pi(P_X,P_Y)\} \subset  \Pi(F_\#P_X,F_\#P_Y).
\end{align}
From (\ref{cont}) we deduce that
\begin{align*}
& W_p  (F_\#P_X,F_\#P_Y)^p  \\
& \leq \inf_{\Gamma \in \Pi(P_X,P_Y)} \int_{\mathcal{S}\times \mathcal{S}} d(x,y)^p\, d(F\times F)_\#\Gamma \\
& = \inf_{\Gamma \in \Pi(P_X,P_Y)} \int_{\mathcal{S} \times \mathcal{S}} d(F(x),F(y))^p\, d\Gamma \\
& \leq \gamma^p \cdot (W_p(P_X,P_Y))^p\,.
\end{align*}
Taking the $p$-root on both sides we conclude.
\end{proof}

\emph{Proof of Theorem \ref{holy-grail-app}}. 

Let $G$ be a deterministic encoder. Using the triangle inequality of the Wasserstein distance and Lemma \ref{wasser} we obtain 
\begin{align}
W_p(P_X, G_\#P_Z) & \leq W_p(P_X,G_\# Q_Z) + W_p(G_\# Q_Z ,G_\#P_Z)  \nonumber\\
& \leq W_p(P_X, G_\# Q_Z) + \gamma \cdot W_p(Q_Z,P_Z)
\label{applicationwasser} 
\end{align}
for every $Q$ deterministic decoder. \\
As $(\mathrm{id}_{\mathcal{X}},G)_\#Q(Z|X)P_X \in \Pi(P_X,G_\#Q_Z)$ we deduce the following estimate:
\begin{equation}\label{2estimate}
 W_p(P_X, G_\#Q_Z) \le \sqrt[p]{\E_{X \sim P_X}[d(X,G(Q(X)))^p]}
\end{equation}
for every $Q$ deterministic decoder.
Now combinining Estimates \ref{applicationwasser} and \ref{2estimate} and using Theorem \ref{mainth} we obtain
\begin{align}
W_p(P_X, P_G) &\stackrel{(\ref{applicationwasser}), (\ref{2estimate})}{\leq} \inf_{Q \text{ det.}}  \sqrt[p]{\E_{X \sim P_X}\left[d(X,G(Q(X)))^p\right]} \nonumber\\
& \qquad   + \gamma \cdot W_p(Q_Z,P_Z) \nonumber\\
&\stackrel{~~~~~}{\leq} \inf_{\substack{Q \text{ det.}\\, Q_Z=P_Z}}  \sqrt[p]{\E_{X \sim P_X}[d(X,G(Q(X)))^p]}  \nonumber \\
 & \qquad + \gamma \cdot \underbrace{W_p(Q_Z,P_Z)}_{=0} \label{step10} \\
& \stackrel{Th.\ref{mainth}}{=} W_p(P_X, P_G) \nonumber. 
\end{align} 
Inequality in Step \ref{step10} holds because we restrict the domain of the $\textit{infimum}$, which in turns implies $W_p(Q_Z, P_Z)=0$. As a consequence all the inequalities are equalities and in particular \begin{align}
W_p(P_X,P_G) = & \inf_{Q \text{ det.}}\sqrt[p]{\E_{X \sim P_X}[d(X,G(Q(X)))^p]} \nonumber\\
& \qquad + \gamma \cdot W_p(Q_Z,P_Z). \label{rhs-corr}
\end{align}

Let $\mathcal{F}$ be any class of probabilistic encoders that at least contains a class of universal approximators.
This means that for every deterministic encoder $Q$ and for every $\varepsilon>0$ there exists $Q^\varepsilon \in \mathcal{F}$ that approximates $Q$ up to an error of $\varepsilon$ in the $L^p$-metric, namely:
\begin{equation}\label{lpapprox}
\sqrt[p]{\E_{X \sim P_X}[d(Q(X),Q^\varepsilon(X))^p]}\leq \varepsilon.
\end{equation}

Let $Q^*$ be an optimal measurable deterministic encoder that optimizes the right-hand side of Equation \ref{rhs-corr} among measurable deterministic encoder (or at least $\delta\le\gamma \varepsilon$ close to it) and 
$Q^{\varepsilon}$ an approximation of $Q^*$ as in Formula \ref{lpapprox}. 
Then with help of the triangle inequality for the $L^p$-metric and the Wasserstein distance:

\begin{align*}
 & \sqrt[p]{\E_{X \sim P_X}\left[d(X, G(Q^{\varepsilon}(X)))^p\right]}  + \gamma \cdot W_p(Q_Z^{\varepsilon},P_Z)\\
& \stackrel{\text{inequalities}}{\leq} \sqrt[p]{\E_{X \sim P_X}\left[d(X,G(Q^*(X)))^p\right]} \\
&\qquad + \underbrace{\sqrt[p]{\E_{X \sim P_X}\left[d(G(Q^*(X)), G(Q^{\varepsilon}(X)))^p\right]}}_{\le \gamma \cdot \varepsilon}  \\
&\qquad  +\gamma \cdot \underbrace{W_p(Q_Z^{\varepsilon},Q_Z^*)}_{\leq \varepsilon} + \gamma \cdot W_p(Q_Z^*,P_Z) \\
& \leq \sqrt[p]{\E_{X \sim P_X}\left[d(X,G(Q^*(X)))^p\right]} \\
&\qquad  + \gamma \cdot W_p(Q_Z^*,P_Z) + 2\gamma \varepsilon \\
& \leq \inf_{Q \text{ det.}} \bigg\{\sqrt[p]{\E_{X \sim P_X}\left[d(X,G(Q(X)))^p\right]}  \\
&\qquad  + \gamma \cdot W_p(Q_Z,P_Z) \bigg \} + \delta + 2\gamma\varepsilon \\
& \stackrel{\ref{mainth}}{\leq} W_p(P_X, P_G) + 3 \gamma \varepsilon,
\end{align*}
where in the last inequality we use additionally Formula \ref{rhs-corr}. As $\varepsilon$ is arbitrary we conclude that 
\begin{align*}
W_p(P_X,P_G) & =  \inf_{Q \in \mathcal{F}} \sqrt[p]{\E_{X \sim P_X}\E_{Z \sim Q(Z|X)}\left[d(X,G(Z))^p\right]} \\
& \qquad + \gamma \cdot W_p(Q_Z,P_Z)\, , 
\end{align*}
where $\mathcal{F}$ is any class of probabilistic encoders that at least contains a class of universal approximators.

Finally, $\mathcal{F}$ can be chosen as the set of all deterministic neural networks, indeed in \cite{hornik1991approximation} it is proven that they form a class of universal approximators for the L$_p$-norms $d(x,y)=\|x-y\|_p$ in Euclidean spaces.
\qed

\section{NOISE AS TARGETS (NAT)}\label{app:nat-compare} 

\cite{bojanowski2017unsupervised} introduce Noise As Targets (NAT), an algorithm for unsupervised representation learning. The method learns a neural network $f_{\theta}$ by embedding images into a uniform hypersphere. A sample $z$ is drawn from the sphere for each training image and fixed. The goal is to learn $\theta$ such that 1-to-1 matching between images and samples is improved: matching is coded with a permutation matrix $R$, and updated with the Hungarian algorithm. The objective is:
\begin{align}
\max_{\theta} \max_{R \in P_{M}}~\Tr(RZf_{\theta}(X)^\top),
\end{align}
where $\Tr(\cdot)$ is the trace operator, $Z$ and $X$ are respectively prior samples and images stacked in a matrix and $P_M \subset S_M$ is the set of $M$-dimensional permutations. NAT learns by alternating SGD and the Hungarian. One can interpret this problem as supervised learning, where the samples are targets (sampled only once) but their assignment is learned; notice that freely learnable $Z$ would make the problem ill-defined. The authors relate NAT to OT, a link that we make formal below.

We prove that the cost function of NAT is equivalent to ours when the encoder output is $L_2$ normalized, $c'$ is squared Euclidean and the Sinkhorn distance is considered with $\varepsilon=0$:
\begin{align}
& \argmax_{\theta} \max_{R \in P_{M}}~\Tr(RZf_{\theta}(X)^\top) \nonumber\\
& = \argmax_{\theta} \max_{R \in P_{M}}~\langle RZ, f_{\theta}(X) \rangle_F \nonumber\\
& = \argmin_{\theta} \min_{R \in P_{M}}~2 - 2 \langle RZ, f_{\theta}(X) \rangle_F \nonumber\\
& = \argmin_{\theta} \min_{R \in P_{M}}~ \| RZ\|_F^2 + \| f_{\theta}(X) \|_F^2\nonumber \\
& \qquad- 2 \langle RZ, f_{\theta}(X) \rangle_F \label{step1}\\
& = \argmin_{\theta} \min_{R \in P_{M}}~ \| RZ -  f_{\theta}(X) \|_F^2 \label{step2}\\
& = \argmin_{\theta} \min_{R \in P_{M}}~  \sum_{i,j} R_{i, j} \| z_i -  f_{\theta}(x_j) \|_2^2 \nonumber\\
& = \argmin_{\theta} \min_{R \in P_{M}}~  \langle R, C \rangle_F \nonumber \\
&  \subseteq \argmin_{\theta} \min_{R \in S_{M}}~  \tfrac{1}{M}\langle R, C \rangle_F - 0 \cdot H(R) \label{step3}\:\:.
\end{align}
Step \ref{step1} holds because both $R$ and $f_{\theta}(X)$ are row normalized. Step \ref{step2} exploits $R$ being a permutation matrix. The inclusion in Step \ref{step3} extends to degenerate solutions of the linear program that may not lie on vertices.
We have discussed several differences between our Sinkhorn encoder and NAT. There are other minor ones with \cite{bojanowski2017unsupervised}: ImageNet inputs are first converted to grey and passed through Sobel filters and the permutations are updated with the Hungarian only every 3 epochs. Preliminary experiments ruled out any clear gain of those choices in our setting.

%
%

\label{sec:visualizations}
\begin{figure*}[!b]
\vspace{-10pt}
\centering
\begin{subfigure}{.18\textwidth}
  \includegraphics[width=\linewidth]{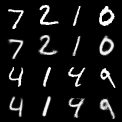}
\end{subfigure}
\begin{subfigure}{.3\textwidth}
  \includegraphics[width=\linewidth]{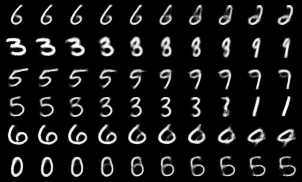}
\end{subfigure}
\begin{subfigure}{.3\textwidth}
  \includegraphics[width=\linewidth]{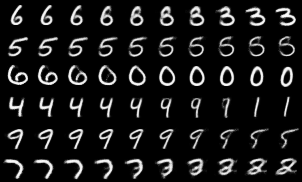}
\end{subfigure}
\begin{subfigure}{.18\textwidth}
  \includegraphics[width=\linewidth]{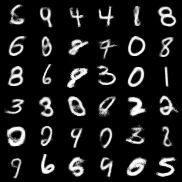}
\end{subfigure}
\begin{subfigure}{.18\textwidth}
  \includegraphics[width=\linewidth]{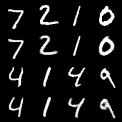}
\end{subfigure}
\begin{subfigure}{.3\textwidth}
  \includegraphics[width=\linewidth]{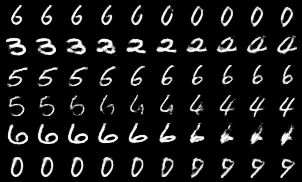}
\end{subfigure}
\begin{subfigure}{.3\textwidth}
  \includegraphics[width=\linewidth]{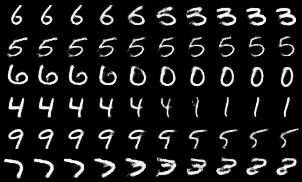}
\end{subfigure}
\begin{subfigure}{.18\textwidth}
  \includegraphics[width=\linewidth]{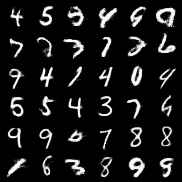}
\end{subfigure}
\caption{From left to right: MNIST reconstructions, extrapolations, interpolations, and samples. Models from Table \ref{table:vae}: VAE (top) and SAE (bottom).}
\label{fig:mnist_visualization}
\end{figure*}  

\begin{figure*}[!b]
\vspace{-10pt}
\centering
\begin{subfigure}{.49\textwidth}
  \includegraphics[width=\linewidth]{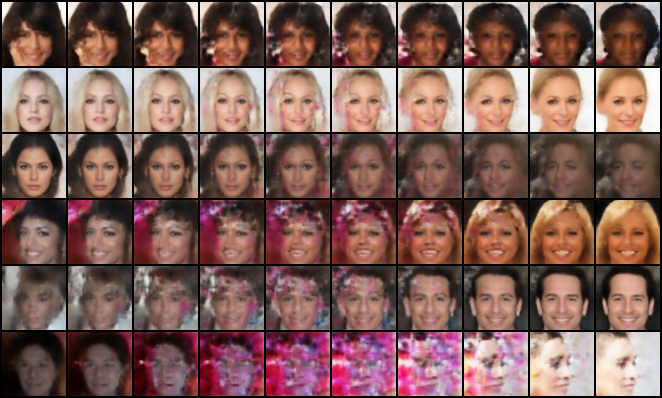}
\end{subfigure}
\begin{subfigure}{.49\textwidth}
  \includegraphics[width=\linewidth]{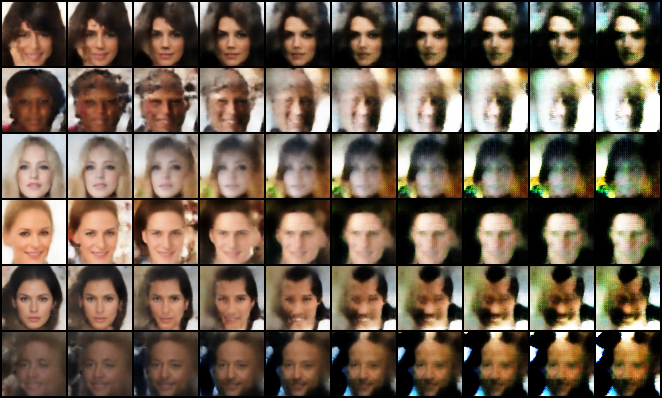}
\end{subfigure}
\begin{subfigure}{.25\textwidth}
  \includegraphics[width=\linewidth]{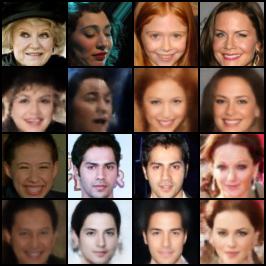}
 \end{subfigure}
  \begin{subfigure}{.25\textwidth}
  \includegraphics[width=\linewidth]{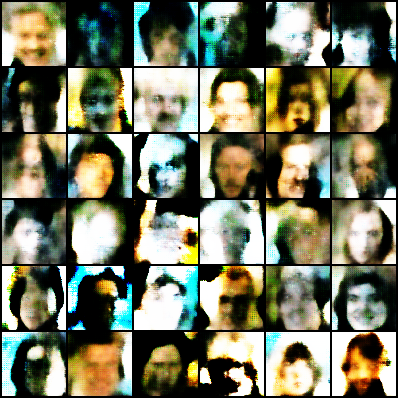}
\end{subfigure}
\caption{Celeba interpolations (top left), extrapolations (top right), reconstructions (bottom left) and samples (bottom right) for HAE with a Gaussian prior. The samples from the prior are clearly saturated, as can be expected from our hypothesis that the Hungarian algorithm for high-dimensional Gaussians causes the variance of the aggregated posterior to shrink.}
\label{fig:celeba_visualization}
\end{figure*}  

\begin{figure*}[h!]
\vspace{-10pt}
\centering
  \includegraphics[width=\linewidth]{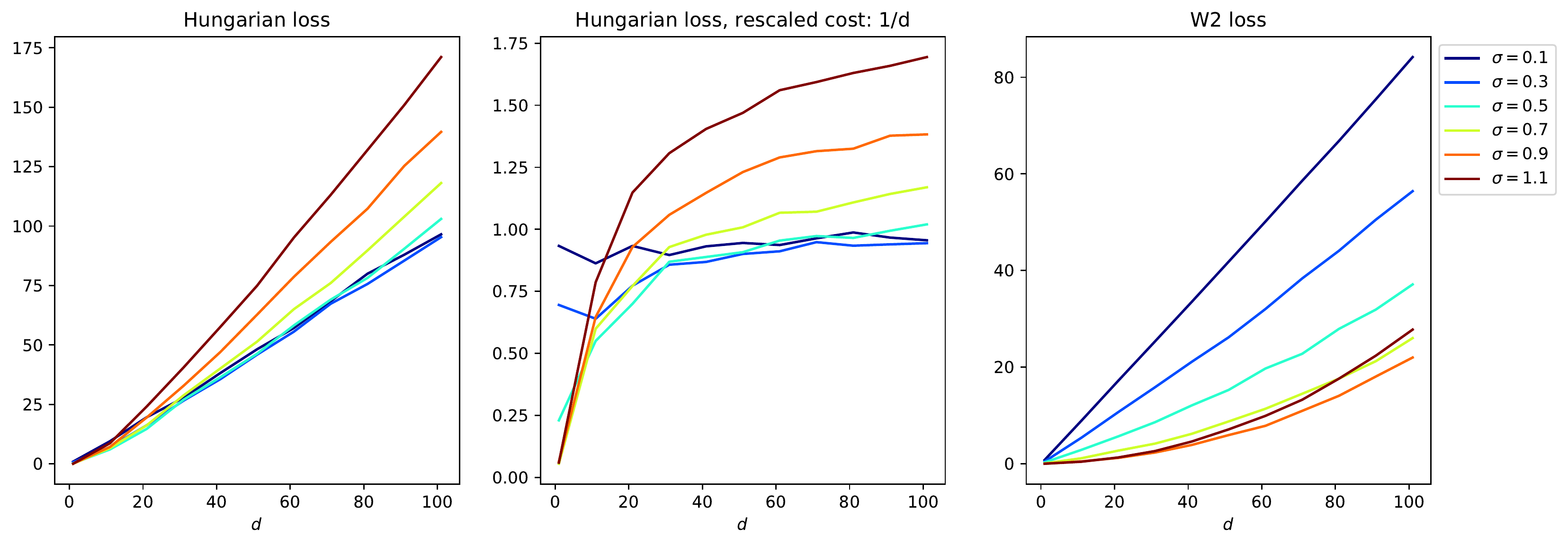}
\caption{Sample-matching Hungarian estimate of the optimal transport loss and the 2-Wasserstein loss based on sample estimates of the mean and covariances of two different sets of samples from two different Gaussians. Both sets of samples contain 128 samples, the first Gaussian has zero mean, and identity covariance, and the second Gaussian has zero mean and $\sigma$ times identity covariance. The losses are shown as a function of dimension $d$. }
\label{fig:high-d}
\end{figure*}  

\section{EXPERIMENTAL SETUP}\label{app:architectures}
We use similar architectures as in \citep{tolstikhin2018wasserstein}. For all methods a batchsize of 256 was used, except for HAE, which used a batch size of 128. HAE with batch size 256 simply becomes too slow. The learning rate fo all models was set to $0.001$, except for ($\beta$-)VAE, which used a learning rate of $0.0001$. FID scores for the CelebA dataset were computed based on the statistics following 
\url{https://github.com/bioinf-jku/TTUR} \citep{heusel2017gans} on the entire dataset. For MNIST we trained LeNet, and computed statistics of the first 55000 datapoints.

\section{VISUALIZATIONS OF AUTOENCODER RESULTS}
\label{app:highd}
In Figure \ref{fig:mnist_visualization} and \ref{fig:celeba_visualization} reconstructions and samples are shown for the MNIST and CelebA datasets.

\section{BEHAVIOUR OF SAMPLE-BASED OT LOSSES IN HIGH DIMENSIONS}
In Fig. \ref{fig:high-d} the sample-based estimated optimal transport cost according to the Hungarian algorithm and the 2-Wasserstein loss between two Gaussians with sample-based estimates for the mean and covariances are shown as a function of dimension. Two sets of samples are taken, one from a standard Gaussian (mean zero, identity covariance), and the second set is sampled from a zero-mean Gaussian with covariance $\sigma$ times the identity. As the dimension is increased, the Hungarian based OT estimates that samples from $\sigma <1$ match the samples from the standard normal distribution better than samples from $\sigma=1$. This can give rise to a reduced standard deviation of the encoder samples when combined with the Hungarian or Sinkhorn algorithm for matching in latent space to samples from a standard normal distribution. For the case of the 2-Wasserstein estimate with sample-based estimates for the mean and covariance this problem sets in only at much higher dimensions.

\end{document}